\documentclass[11pt]{article}

\usepackage[utf8]{inputenc}
\usepackage[T1]{fontenc}
\usepackage{lmodern}
\usepackage{tikz}
\usepackage{pdfpages}
\usetikzlibrary{calc,shapes.geometric,arrows,positioning}
\usepackage{amsmath,amssymb,amsthm,mathtools}
\usepackage{bm}
\usepackage{bbm}
\usepackage{natbib}
\usepackage{geometry}
\usepackage{graphicx}
\usepackage{subcaption}
\usepackage{hyperref}
\usepackage{booktabs}
\hypersetup{
    colorlinks=true,
    linkcolor=blue,
    citecolor=blue,
    urlcolor=blue
}

\usepackage{enumitem}

\newtheorem{theorem}{Theorem}

\newtheorem{corollary}{Corollary}

\theoremstyle{definition}
\newtheorem{definition}{Definition}
\newtheorem{example}{Example}
\newtheorem{remark}{Remark}
\newtheorem{prop}{Proposition}

\newcommand{\R}{\mathbb{R}}

\title{
A Bregman Perspective on Classification and Regression Trees
}

\author{
Mathias Bourel \\ \\
IESTA, Facultad de Ciencias Económicas y de Administración,\\ Universidad de la República, Uruguay\\ 
IRL-2030, Instituto Franco-Uruguayo de Matemática e Interacciones (IFUMI)\\
\texttt{mathias.bourel@fcea.edu.uy}
}

\date{\today}

\begin{document}

\maketitle

\begin{abstract}
Classification and Regression Trees (CART) constitute one of the most influential paradigms in statistical learning. Although a variety of impurity measures have been proposed for different statistical models, these criteria are typically introduced on a case-by-case basis and analyzed separately. In this paper, we study CART through the lens of Bregman divergences. This perspective places the classical least-squares criterion, Poisson deviance, Kullback–Leibler-type losses, and other impurity measures associated with exponential-family models within a common framework. As a result, key ingredients of the CART methodology—including node representatives, impurity measures, and split-selection rules- can be expressed and analyzed through general properties of convex functions rather than through separate model-specific constructions. Beyond the algorithmic formulation, we investigate theoretical properties of Bregman-based CART procedures. In particular, we analyze how geometric properties of the generating convex function influence impurity reductions and stability of recursive partitions. We also establish consistency results within the proposed framework, providing a unified theoretical treatment for a broad family of CART-type procedures. Our results provide a geometric interpretation of impurity-based tree construction and show that many classical CART impurity criteria admit a common interpretation within a Bregman framework.

\end{abstract}

{\bf Keywords:} Decision trees, Bregman divergences, Tree-based methods, Exponential families, Convex optimization, Information geometry, Statistical learning, Nonparametric models
\section{Introduction}

Decision trees are among the most widely used tools in statistical learning due to their interpretability and flexibility. The classical CART (Classification and Regression Trees) algorithm, introduced by Breiman, Friedman, Olshen, and Stone in 1984 (\cite{breiman1984classification}), became one of the most influential and widely used methods for classification and regression problems. When the goal is to predict a continuous response, regression trees are
typically based on squared-error loss, whereas classification trees rely
on criteria such as the misclassification error, entropy, or the Gini
index.

In this work, we develop a general Bregman-divergence framework for CART in both regression and classification settings. Bregman divergences, introduced by Lev Bregman in 1967 \cite{Bregman1967} in the context of convex programming, have since become fundamental tools in optimization, information geometry, and statistical learning. For two points $x$ and $y$ in a convex set $C$,  and a strictly convex, differentiable function $\psi:C \to \mathbb{R}$ the Bregman divergence generated by $\psi$ between $x$ and $y$ is defined as:
\[
D_\psi(x,y)
=
\psi(x)-\psi(y)-\langle \nabla\psi(y),x-y\rangle.
\] 

This perspective has had a significant impact in clustering and related problems  (\cite{banerjee2005clustering}), but despite these connections, a systematic treatment of CART impurity measures and split gains within a general Bregman framework has not been fully developed for recursive partitioning. While the basic identities of Bregman divergences are well known, their implications for recursive partitioning, impurity reduction, and the analysis of CART-type procedures have received comparatively less attention than in clustering and related settings.  Although many of these criteria can be related to exponential-family models and Bregman divergences, the underlying geometric structure is often left implicit, particularly in the analysis of impurity reduction and split selection. Moreover, in contrast to clustering, which seeks a global partition minimizing a given objective, decision trees construct partitions recursively using greedy splitting criteria. This sequential structure introduces new analytical challenges and requires a dedicated study of the split gain as a central object.


Several extensions of recursive partitioning methods have been proposed for
responses arising from exponential-family models. In particular,
model-based recursive partitioning and generalized linear model trees
fit parametric models within each terminal node and recursively split the data
according to parameter instability tests
\cite{zeileis2008model,hothorn2015partykit,seibold2016model}.
More recently, distributional trees have extended this framework by
allowing all parameters of a distributional model to vary across the
partition \cite{schlosser2019distributional}.
These approaches focus on local likelihood modeling and parameter
heterogeneity across subpopulations. In contrast, the present work
retains the classical CART paradigm of piecewise-constant prediction
within terminal nodes and studies CART impurity measures from a unified Bregman perspective.  Our contribution is to show that many impurity criteria used within the CART paradigm can be formulated and analyzed through a common Bregman-divergence framework. Building on this perspective, we formulate node impurity and split selection in terms of Bregman divergences.

The present work makes this structure explicit within the CART framework. By formulating node impurity and split selection in terms of Bregman divergences, we obtain a unified perspective that covers impurity measures associated with a broad class of exponential-family models. The main contributions of the paper
are threefold. First, we formulate
CART impurity measures and split gains within a general Bregman
framework. Second, we study impurity reduction from a geometric perspective and relate the resulting split gains to classical deviance-based criteria. Third, we investigate the resulting Bregman Trees through
theoretical analysis and empirical studies.
 The remainder of the paper is organized as follows. Section 2 reviews the necessary background on exponential families and Bregman divergences. Section 3 provides a brief review of the CART methodology, including impurity measures, splitting criteria, and pruning procedures. Section 4  introduces the proposed Bregman Tree methodology and Section 5 discusses its theoretical properties. Section 6 reports simulation studies and a real-data application comparing Bregman Trees and CART. Finally, we conclude with a discussion and directions for future research.

%

\section{Background on Bregman Divergences and Exponential Families
}
The construction of the proposed Bregman Trees relies on a close relationship between exponential-family distributions, convex analysis, and Bregman divergences. The aim of this section is to review these connections and establish the notation used throughout the paper. We start with the definition of Bregman divergences and some associated properties, then  we present the exponential-family framework, and the associated convex conjugate functions. These concepts will serve as the basis for the impurity measures considered later in the tree-building procedure.

\subsection{Bregman Divergences and some basic properties}

Bregman divergences constitute a broad class of discrepancy measures generated by convex functions. They encompass many well-known loss functions used in statistics, machine learning, and information theory, including the squared loss, the Kullback--Leibler divergence, the logistic loss, the Itakura--Saito divergence, and the Mahalanobis distance.  They were introduced by Lev Bregman in 1967  in the context of convex programming (\cite{Bregman1967}). Since then, they have found numerous applications in areas such as information geometry, statistical learning, and convex optimization. Their importance stems from the fact that many statistical models and loss functions can be naturally expressed in terms of a suitable Bregman divergence. We begin by recalling the formal definition of a Bregman divergence and some of its fundamental properties.

\begin{definition}
Let $\psi:C\to\R$ be a differentiable strictly convex function on $C \subset \mathbb{R}^n$ a convex set.
The associated Bregman divergence $D_{\psi}:C \times C \to [0,+\infty)$ is defined as

$$D_\psi(x,y)
=
\psi(x)-\psi(y)-\langle \nabla\psi(y),x-y\rangle.$$
\end{definition}

\begin{example}
\begin{enumerate}
\item  If $\psi(x)=x^2$ then $D_{\psi}(x,y)=(x-y)^2$ corresponds to the euclidean distance.
\item If ${\bf p}=(p_1,\dots,p_n)\geq {\bf 0} , \sum p_i=1$ is a vector of probability and $\psi({\bf p})=\sum \limits_{i} p_i \log(p_i)$ then $D_{\psi}({\bf p},{\bf q})=\sum \limits_{i} p_i \log \frac{p_i}{q_i}=D_{KL}({\bf p}||{\bf q}) $ is the Kullback-Leibler divergence between vectors ${\bf p}$ and ${\bf q}$).
\end{enumerate}
\end{example}


The Bregman divergence is exactly the error of the approximation between $\psi(x)$ and the linear approximation at $\psi(y)$ given by its tangent plane:
\[
D_\psi(x,y) = \psi(x) - \big[ \psi(y) +  \langle \nabla\psi(y), x - y\rangle \big].
\] Another interpretation can be given by the approximation: assuming $\psi$ have derivatives of second order   \[
D_\psi(x, y) \approx \frac{1}{2}(x - y)^t \nabla^2 \psi(\zeta) (x - y)
\] where $\zeta$ is a point between $x$ and $y$ so the Bregman divergence corresponds with
the local curvature of $\psi$.

\begin{figure}[h]
  \centering
  \begin{subfigure}[b]{0.45\textwidth}
    \centering
    \includegraphics[width=\textwidth]{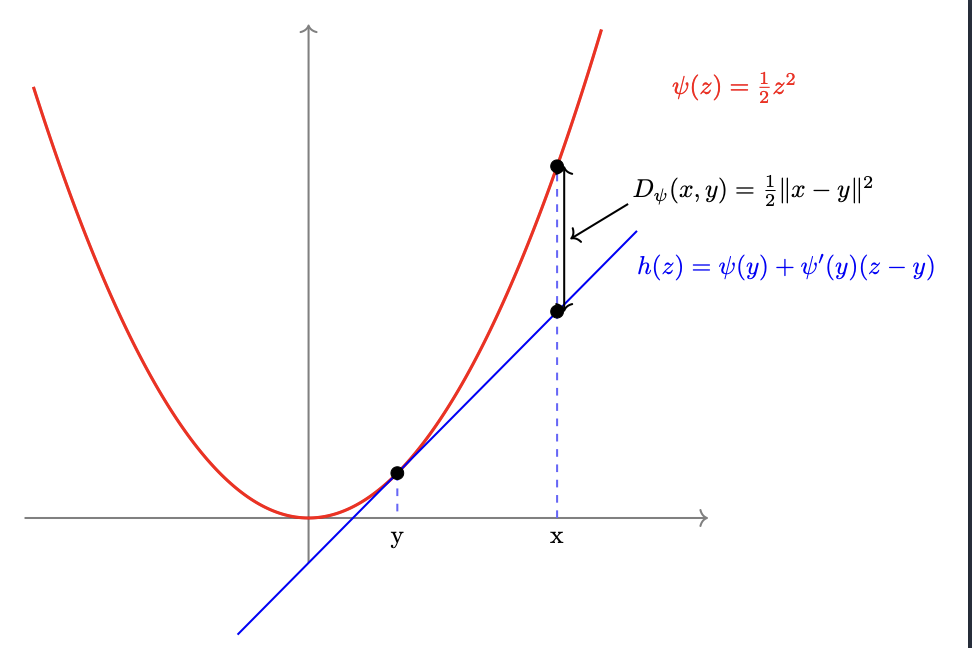}
    \label{fig:sqrdBregman}
  \end{subfigure}
  \hfill
  \begin{subfigure}[b]{0.45\textwidth}
    \centering
    \includegraphics[width=\textwidth]{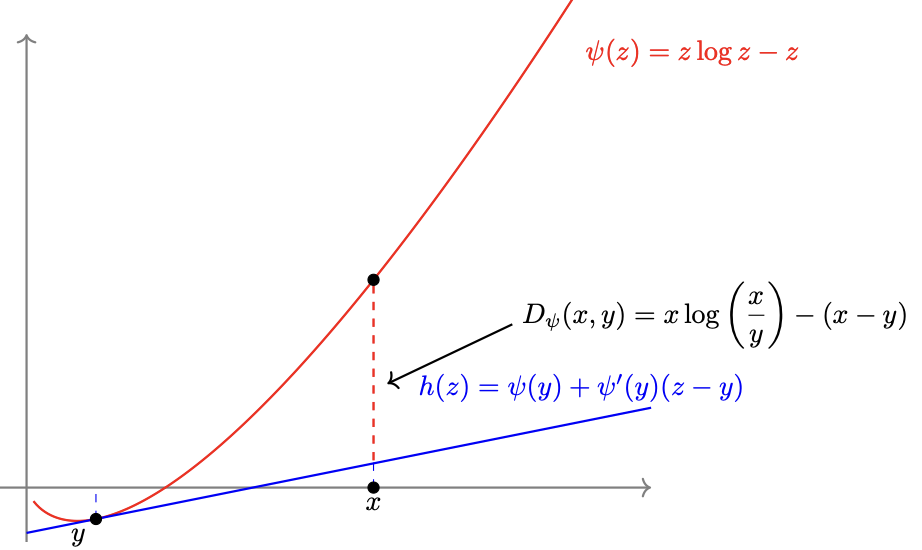}
    \label{fig:poissBregman}
  \end{subfigure}
  \caption{Examples of Bregman divergences. Left: squared Euclidean distance generated by $\psi(x)=x^2$. Right: Poisson divergence generated by $\psi(x)=x\log x-x$. In both cases, the divergence is represented by the difference between the convex function and its tangent line at the reference point.}

\end{figure}

Unlike distances, Bregman divergences are generally not symmetric and do not satisfy the triangle inequality. They are strictly convex in 1st argument but in general not in 2nd argument. However, they possess several important properties that make them particularly suitable for statistical learning, and have been established, for example in \cite{pfau2025generalized}:

\begin{prop}
    Let $\psi:C\to\R$ be a differentiable strictly convex function on $C \subset \mathbb{R}^n$ a convex set and $D_{\psi}:C \times C \to [0,+\infty)$ its associated Bregman divergence. If $X$ is a random variable, then:
    \begin{enumerate}
        \item For all $x,y \in C$: $D_{\psi}(x,y)\geq 0$ and equals 0 iff $x=y$.
        \item  $\mathbb{E}(X)=\underset{z}{\text{Argmin}}\,\mathbb{E}\bigl(D_{\psi}(X,z)\bigr)$
 
\item {\bf Bias-variance decomposition:} $\underbrace{\mathbb{E}\bigl(D_{\psi}(X,s)\bigr)}_{\text{MSE}}=\underbrace{D_{\psi}\bigl(\mathbb{E}(X),s \bigr)}_{\text{Bias}^2}+\underbrace{\mathbb{E}\bigl(D_{\psi}(X,\mathbb{E}(X))\bigr)}_{\text{Var}}$ 
\item {\bf The three points property:}  $D_{\psi}(x,z)=D_{\psi}(x,y) + D_{\psi}(y,z)+\langle \nabla \psi(z)-\nabla \psi(y),x-y\rangle$.

This is a generalization of law of cosinus:
    because with $\psi(x)=\frac{1}{2}||x||^2$, the euclidean case gives $$||x-z||_2^{2}=||x-y||_2^{2}+||y-z||_2^{2}-2 \underbrace{\langle z-y,x-y\rangle}_{||z-y||_2^{2}||x-y||_2^{2}\cos \angle xyz}$$

    \end{enumerate}
\end{prop}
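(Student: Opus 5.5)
The plan is to derive all four items from the definition of $D_\psi$ together with strict convexity of $\psi$. The workhorse is item 4, the three points property, which is pure algebra; items 2 and 3 then follow by taking expectations. Throughout I will assume $X$ takes values in $C$, that $\mathbb{E}(X)\in C$ (true because $C$ is convex, at least for finitely supported $X$ and under mild closedness in general), and that $\mathbb{E}|\psi(X)|<\infty$ and $\mathbb{E}\|X\|<\infty$, so that every expectation below is finite.

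\textbf{Item 1.} For a differentiable convex function, the first-order characterization gives $\psi(x)\ge \psi(y)+\langle\nabla\psi(y),x-y\rangle$, hence $D_\psi(x,y)\ge 0$. For the equality case, suppose $x\neq y$ but $D_\psi(x,y)=0$. Take the midpoint $m=(x+y)/2$. Strict convexity gives $\psi(m)<\tfrac12\psi(x)+\tfrac12\psi(y)$. Substituting $\psi(x)=\psi(y)+\langle\nabla\psi(y),x-y\rangle$ yields $\psi(m)<\psi(y)+\langle\nabla\psi(y),m-y\rangle$, which contradicts the first-order inequality at $m$. So $D_\psi(x,y)=0$ forces $x=y$, and the converse is immediate from the definition.

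\textbf{Item 4.} Expand $D_\psi(x,y)+D_\psi(y,z)$ from the definition. The $\psi(y)$ terms cancel, and what remains is
\[
\psi(x)-\psi(z)-\langle\nabla\psi(y),x-y\rangle-\langle\nabla\psi(z),y-z\rangle .
\]
Subtract this from $D_\psi(x,z)=\psi(x)-\psi(z)-\langle\nabla\psi(z),x-z\rangle$. Writing $x-z=(x-y)+(y-z)$, the difference is exactly $\langle\nabla\psi(z)-\nabla\psi(y),\,x-y\rangle$ up to sign. I will check the sign carefully against the stated form; this bookkeeping is the only delicate point in the item. The Euclidean remark follows by taking $\psi=\tfrac12\|\cdot\|^2$, for which $\nabla\psi=\mathrm{id}$.

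\textbf{Items 3 and 2.} Apply item 4 with $x=X$, $y=\mu:=\mathbb{E}(X)$ and $z=s$, then take expectations. The cross term is $\langle\nabla\psi(s)-\nabla\psi(\mu),\,\mathbb{E}(X)-\mu\rangle$, and it vanishes because $\nabla\psi(s)-\nabla\psi(\mu)$ is deterministic and $\mathbb{E}(X)-\mu=0$. This gives the decomposition
\[
\mathbb{E}\,D_\psi(X,s)=D_\psi(\mu,s)+\mathbb{E}\,D_\psi(X,\mu).
\]
For item 2, the last term does not depend on $s$. By item 1, $D_\psi(\mu,s)\ge0$ with equality iff $s=\mu$, so $\mu$ is the unique minimizer of $z\mapsto\mathbb{E}\,D_\psi(X,z)$.

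\textbf{Main obstacle.} The conceptual content is light. The main risk is the sign and rearrangement in item 4, since the stated version must be reconciled exactly with the definition. The remaining care goes into the integrability hypotheses needed to interchange expectation and inner product and to guarantee $\mathbb{E}(X)\in C$; I will state these hypotheses explicitly.
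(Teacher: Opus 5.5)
The one step you deferred, the sign in item 4, is exactly where the statement fails, so there is a genuine gap there. Items 1--3 are sound: the midpoint argument for the equality case works, and so does the three-point identity with $x=X$, $y=\mathbb{E}(X)$, $z=s$ followed by expectations, with uniqueness from item 1. The paper gives no proof of this proposition, only a literature pointer, so there is no competing route.

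If you do the bookkeeping with $x-z=(x-y)+(y-z)$, you get
\[
D_\psi(x,z)-D_\psi(x,y)-D_\psi(y,z)=\langle\nabla\psi(y)-\nabla\psi(z),\,x-y\rangle ,
\]
which is the \emph{negative} of the term in the statement. So item 4 as written is false, and no argument can prove it. Take $\psi(x)=\tfrac12x^2$ on $\mathbb{R}$ and $x=2$, $y=1$, $z=0$. Then $D_\psi(x,z)=2$ and $D_\psi(x,y)=D_\psi(y,z)=\tfrac12$, while the stated cross term is $(0-1)(2-1)=-1$, so the right-hand side is $0\neq 2$. The statement is also internally inconsistent. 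Its general formula with $\psi=\tfrac12\|\cdot\|_2^2$ gives $\|x-z\|_2^2=\|x-y\|_2^2+\|y-z\|_2^2+2\langle z-y,x-y\rangle$, but the displayed law of cosines correctly has $-2\langle z-y,x-y\rangle$. Your sentence ``the Euclidean remark follows by taking $\psi=\tfrac12\|\cdot\|^2$'' holds only for the corrected identity.

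What you should prove instead is
\[
D_\psi(x,z)=D_\psi(x,y)+D_\psi(y,z)+\langle\nabla\psi(y)-\nabla\psi(z),\,x-y\rangle ,
\]
stating explicitly that the sign in item 4 is a typo. This corrected form is what the paper itself uses later, in the proof of the Bregman split decomposition, where the cross term is $\langle y-\mu_A,\nabla\phi(\mu_A)-\nabla\phi(\mu)\rangle$.

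Nothing else in your plan changes. Items 2 and 3 only need the cross term to be linear in $x-y$ with a deterministic coefficient when $y=\mathbb{E}(X)$, so it vanishes in expectation whichever sign it carries.

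Two small side remarks. In the underbrace, $\langle z-y,x-y\rangle$ equals $\|z-y\|_2\,\|x-y\|_2\cos\angle xyz$, not the product of squared norms. Your hedge about $\mathbb{E}(X)\in C$ is unnecessary: in $\mathbb{R}^n$, the mean of an integrable random vector taking values in a convex set always lies in that set (induct on dimension using a supporting hyperplane at the mean).
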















Bregman divergences have steadily moved from a theoretical curiosity to a practical
building block across a wide range of modern machine learning and optimisation problems.
In the optimisation literature, they underpin Mirror Descent
\citep{nemirovsky1983wiley,raskutti2015information}, a generalisation of gradient descent that replaces
the Euclidean projection step with a geometry-aware update tailored to the structure of
the parameter space. More recently, Bregman divergences have been incorporated into
Optimal Transport \citep{kainth2023bregman}, where the standard squared-Euclidean cost
 is replaced by a divergence-based transport cost,
enabling richer comparisons between probability measures, transforming the usual Wasserstein distance  $W_c(\mu,\nu) = \inf_{\gamma \in \Pi(\mu,\nu)} 
\int c(x,y)\, d\gamma(x,y)$ in
 $W_{D_{\psi}}(\mu,\nu)=\inf_{\gamma \in \Pi(\mu,\nu)} \int_{}D_{\psi}(x,y)\,d\gamma(x,y)$. In the context of unsupervised
learning, \citet{banerjee2005clustering} showed that the $K$-means algorithm generalises naturally
to any Bregman divergence: replacing the squared Euclidean distance with $D_\psi(x,\mu_k)$
yields a family of clustering algorithms adapted to the geometry of the data. In that setting, cluster centroids are characterized as minimizers of Bregman risk, remplacing the minimization of $\sum \limits_{k=1}^{K} \sum \limits_{x \in C_k}||x-\mu_k||^2$ by $\underset{C_1,\dots,C_K}{\text{Argmin}}\sum \limits_{k=1}^{K} \sum \limits_{x \in C_k}D_{\psi}(x,\mu_k)$, and a decomposition of the total divergence analogous to the classical bias--variance identity is established. These results provide a unified view of clustering algorithms such as $k$-means and reveal deep connections between geometry and statistical modeling. Finally,
at the intersection of ensemble methods and loss function theory, \citet{wood2023unified} established
a unified bias--variance--diversity decomposition for arbitrary Bregman divergences,
extending the classical squared-error ambiguity decomposition to the full exponential family.
Taken together, these developments suggest that Bregman divergences provide a unifying
language for a surprisingly broad set of problems, and that results derived in one
setting---such as the tree-based methods studied here---may carry implications well
beyond their original domain.

\subsection{Bregman divergence and exponential family}

Bregman divergences have been extensively used in statistics, information theory, and machine learning as a unifying framework for a broad class of loss functions associated with convex potentials. The main reason lies in their connection with exponential family distributions where it has been well established: the maximum likelihood estimation can be equivalently formulated as the minimization of a Bregman divergence (\cite{banerjee2005clustering}).
Exponential families play a central role in modern statistical modeling and constitute the probabilistic foundation of generalized linear models (GLMs). In the GLM framework, the response distribution is assumed to belong to an exponential family, allowing a unified treatment of a wide range of data types, including Gaussian, Poisson, binomial, and Gamma responses (\cite{McCullagh1989}).

A random variable $X$ has a distribution coming from the one parameter exponential family if its density function can be written as:

$$p_{\theta}(x)=\exp\bigl(\theta x - \psi(\theta)\bigr)p_0(x)$$

where $\theta$ is called the natural parameter and $\psi(\theta)$ is called the log normalizer or cumulant function.   Many distributions come from the exponential family: Bernoulli, Binomial, Poisson (discrete), Exponential, Beta, Gaussian (continuous), etc.
\begin{example}
    
\begin{enumerate}
\item {\bf Poisson density.}  $\mathbb{P}(X=x)=\frac{e^{-\lambda}{\lambda}^{ x}}{x!}  =e^{x \log\lambda-\lambda-\log(x!)}, \,\,\theta=\log (\lambda), \psi(\theta)=e^{\theta}$

\item {\bf Bernoulli density.} $\mathbb{P}(X=x)=p^{x}(1-p)^{1-x} =e^{\log(p^x(1-p)^{1-x})}=e^{\left( x \theta -\log(1+e^{\theta}) \right)},$
$ \theta=\log\left( \frac{p}{1-p}\right),\,\psi(\theta)=\log(1+e^{\theta})$
\item {\bf Normal density.} $f_{\mu}(x)=\frac{1}{\sqrt{2\pi}}e^{\frac{1}{2}(x-\mu)^2}  =e^{\mu x-\frac{\mu^2}{2}}\frac{1}{\sqrt{2\pi}}e^{-x^2/2}$,
$ \theta=\mu,\,\psi(\theta)=\frac{\theta^2}{2}$

\end{enumerate}

\end{example}

The following review is necessarily brief and focuses only on the aspects of exponential families that are relevant to the development of the proposed methodology. Readers interested in a more comprehensive treatment are referred to Efron's monograph \emph{Exponential Families in Theory and Practice} (\cite{efron2022exponential}), which provides an extensive account of the theoretical foundations and statistical applications of exponential-family models.

 \medskip

For any convex function $f$ with domain $S$ the gradient of $f$ at some point $x$ is a vector $v$ that satisfies $f(y)\geq f(x)+\langle v, y-x \rangle\,\,\forall\,y\in S$
which is equivalent to:
$\underset{y\in S}{\max}\bigl\{\langle v,y \rangle -f(y)\bigr\}=\langle v, x \rangle -f(x)$. This motivates the following definition of convex conjugate Legendre function:

\begin{definition}
    Let $\psi:C \subset \mathbb{R}^n \to \overline{\mathbb{R}}$ a convex function. Its convex conjugate is the function $\psi^{*}(v):\mathbb{R}^n\to \overline{\mathbb{R}}$ given by $$\psi^{*}(v)=\underset{x\in C}{\sup}\,\left[\langle v,x \rangle -\psi(x)\right]$$ 
\end{definition}
The number $\psi^*(v)$ indicates how far it is needed to lower a line with slope $v$ so that it touches but does not cross $\psi$.
   
 \begin{figure}[!ht]
 \begin{center}
              \includegraphics[scale=0.5]{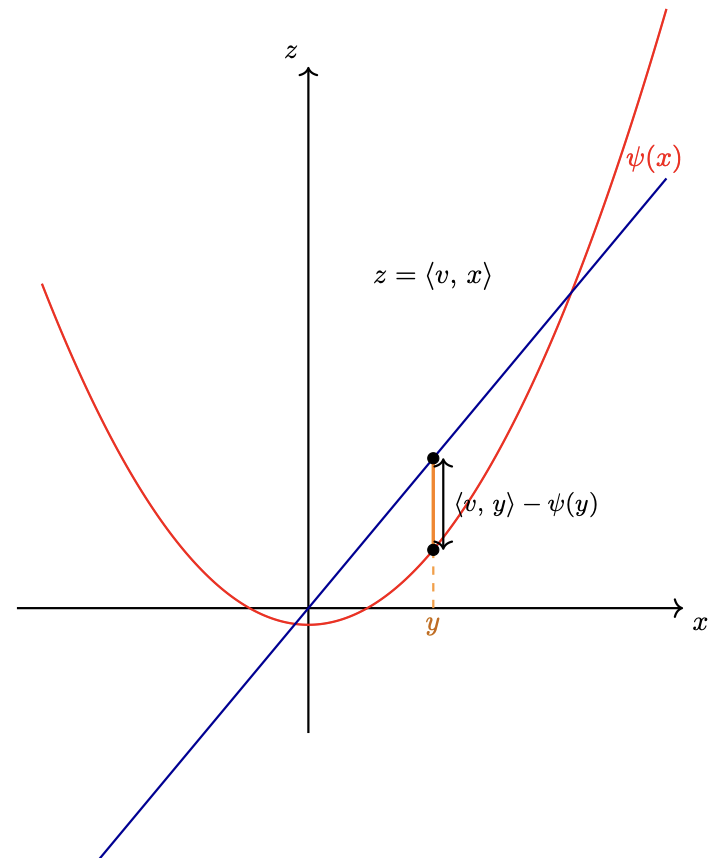}
              \caption{Convex Legendre conjugate of a convex function $\psi$}
               \end{center}
          \end{figure} 

\begin{example}
    
\begin{enumerate}
    \item If $\psi(x)=\langle a,x\rangle -b$, then $\psi^{*}(v)=\left\{\begin{array}{cc}b,& v=a\\ +\infty,&v\neq a \end{array}\right.$

   \item If $\psi(x)=\frac{1}{2}||x||^2$, then $\psi^{*}(v)=\frac{1}{2}||x||^2$ 
\end{enumerate}

\end{example}    
    
Let $\psi$ denote the cumulant function of a regular exponential family. The Legendre--Fenchel conjugate of $\psi$ is defined as
$$\psi^{*}(v)=
\sup_{\theta}\left\{
\langle v,\theta\rangle-\psi(\theta)
\right\}.$$
Since $\psi$ is differentiable and strictly convex, the supremum is attained at the unique point satisfying $v=\nabla\psi(\theta)$. Consequently, $\psi^{*}(v)
=\langle v,\theta\rangle-\psi(\theta)$, where $\theta$ is implicitly determined by the relation $v=\nabla\psi(\theta)$. In the context of exponential families, the gradient of the cumulant function coincides with the mean parameter \cite{efron2022exponential},
$\mu=\nabla\psi(\theta)$, so that the conjugate function can be expressed in terms of $\mu$ as $$
\psi^{*}(\mu) =
\bigl\langle \mu,\theta(\mu)\bigr\rangle -
\psi\bigl(\theta(\mu)\bigr).$$
Throughout the remainder of the paper, we shall denote the convex conjugate of $\psi$ by $\phi=\psi^{*}$. The function $\phi$ plays a central role in the theory of Bregman divergences, since the divergences associated with exponential-family models arise naturally from this convex dual representation. Under standard regularity conditions, the cumulant function $\psi$ is strictly convex and differentiable. Consequently, the gradient mappings $\nabla\psi$ and $\nabla\phi$, are inverses of each other. This establishes a one-to-one correspondence between the natural parameter space and the mean parameter space, $\mu=\nabla\psi(\theta),\,
\theta=\nabla\phi(\mu)$,
a property that plays a central role in the connection between exponential families and Bregman divergences. The function $\phi$ is itself convex, and the Legendre transform is involutive, namely $\phi^{*}=\psi$. A fundamental property of convex conjugates is Fenchel's inequality, $\langle x,y\rangle
\leq
\psi(x)+\phi(y)$, with equality if and only if $y=\nabla\psi(x)$. In particular,

\[
\psi(x)+\phi(\nabla\psi(x))
=
\langle x,\nabla\psi(x)\rangle.
\]

\medskip

A fundamental connection between exponential families and Bregman divergences was established by Banerjee et al (\cite{banerjee2005clustering}). Their result shows that every regular exponential-family distribution admits a representation in terms of a Bregman divergence generated by the Legendre conjugate of its cumulant function.

\begin{theorem}[\cite{banerjee2005clustering}]
Let $p_{\psi,\theta}$ be a member of a regular exponential family with cumulant function $\psi$, and let $\phi=\psi^*$ denote its Legendre conjugate. Then the density can be expressed as

\[
p_{\psi,\theta}(x)
=
\exp\!\left(
-D_{\phi}(x,\mu(\theta))
\right)
\, b_{\phi}(x),
\]

where $\mu(\theta)=\nabla\psi(\theta)$ is the mean parameter, $D_{\phi}$ is the Bregman divergence generated by $\phi$, and $b_{\phi}$ is a function depending only on $x$.
\end{theorem}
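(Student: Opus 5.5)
The proof will be a direct algebraic manipulation of the exponent $\langle x,\theta\rangle-\psi(\theta)$ of the exponential-family density, using only the Legendre duality facts recalled above: $\mu=\nabla\psi(\theta)$, $\theta=\nabla\phi(\mu)$, and the Fenchel equality $\psi(\theta)+\phi(\mu)=\langle \theta,\mu\rangle$. The strategy is to add and subtract $\phi(x)$ in the exponent and then recognize a Bregman divergence generated by $\phi$.

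\textbf{Step 1: eliminate $\psi$.} Fix $\theta$ in the natural parameter space and set $\mu=\mu(\theta)=\nabla\psi(\theta)$. Because $\theta$ and $\mu$ are dual points, the Fenchel equality gives $\psi(\theta)=\langle\theta,\mu\rangle-\phi(\mu)$. Substituting this into the exponent yields
\[
\langle x,\theta\rangle-\psi(\theta)=\phi(\mu)+\langle x-\mu,\theta\rangle .
\]

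\textbf{Step 2: identify the Bregman divergence.} Replace $\theta$ by $\nabla\phi(\mu)$ and add and subtract $\phi(x)$. This gives
\[
\langle x,\theta\rangle-\psi(\theta)=\phi(x)-\Bigl[\phi(x)-\phi(\mu)-\langle x-\mu,\nabla\phi(\mu)\rangle\Bigr]=\phi(x)-D_\phi(x,\mu).
\]
The bracketed term is exactly $D_\phi(x,\mu)$ by the definition of the Bregman divergence.

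\textbf{Step 3: collect the $\theta$-free factors.} Writing the density as $p_{\psi,\theta}(x)=\exp(\langle x,\theta\rangle-\psi(\theta))\,p_0(x)$, Step 2 gives
\[
p_{\psi,\theta}(x)=\exp\bigl(-D_\phi(x,\mu(\theta))\bigr)\,b_\phi(x),\qquad b_\phi(x)=\exp(\phi(x))\,p_0(x).
\]
The factor $b_\phi$ depends only on $x$, which is what the statement requires.

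\textbf{Main obstacle: domains.} The algebra itself is routine. The delicate point is that $\phi(x)$ and $D_\phi(x,\mu)$ must make sense at the observed value $x$, which need not lie in the interior of the mean parameter space. For example, a Poisson observation $x=0$ sits on the boundary, where $\phi(x)=x\log x-x$ is defined only by continuity.

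I would handle this by invoking regularity of the family: $\psi$ is of Legendre type, so $\phi$ is essentially smooth. I would then restrict $x$ to the closure of the convex hull of the support, or equivalently to $\mathrm{dom}\,\phi$, and extend $\phi$ there by lower semicontinuity. The divergence $D_\phi(x,\mu)$ still only needs $\nabla\phi$ at $\mu$, and $\mu$ lies in the interior because $\mu=\nabla\psi(\theta)$ with $\theta$ interior.

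For $x$ outside $\mathrm{dom}\,\phi$, the density $p_0(x)$ vanishes, so setting $b_\phi(x)=0$ there is consistent. I would state these conventions explicitly, following \cite{banerjee2005clustering}, rather than reprove the general Legendre-type theory.
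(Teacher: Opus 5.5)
The paper gives no proof of this statement; it quotes it from \cite{banerjee2005clustering}. Your Steps 1--3 reproduce the argument given there: for $x\in\mathrm{dom}\,\phi$ and $\mu=\nabla\psi(\theta)$,
\[
\langle x,\theta\rangle-\psi(\theta)=\phi(\mu)+\langle x-\mu,\nabla\phi(\mu)\rangle=\phi(x)-D_\phi(x,\mu),
\]
whence $b_\phi=e^{\phi}p_0$. This part is correct, and $\mathrm{dom}\,\phi$ is exactly the set on which the cited source asserts the identity.

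Your paragraph on domains, however, rests on three false claims, and your resolution of the ``main obstacle'' does not work as written.

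\textbf{Claim 1.} $\mathrm{dom}\,\phi$ is not the closed convex support $C$. For the exponential family, $\phi(\mu)=-\log\mu-1$ has domain $(0,\infty)$, while $C=[0,\infty)$. For regular families one only has $\mathrm{int}\,C\subseteq\mathrm{dom}\,\phi\subseteq C$.

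\textbf{Claim 2.} Extending $\phi$ by lower semicontinuity changes nothing, because $\phi=\psi^*$ is already closed. In the exponential case its value at $0$ is $+\infty$. In the Poisson case, $\phi(0)=\sup_\theta(-e^\theta)=0$ comes directly from the conjugate, so no continuity convention is needed there either.

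\textbf{Claim 3.} It is not true that $p_0$ vanishes off $\mathrm{dom}\,\phi$. The exponential base density is $\mathbf{1}_{\{x\ge 0\}}$, which equals $1$ at $x=0\notin\mathrm{dom}\,\phi$. Worse, take the von Mises--Fisher family on the unit circle. It is regular, with $\Theta=\R^2$ and $\psi(\theta)=\log\bigl(2\pi I_0(\|\theta\|)\bigr)$, where $I_0$ is the modified Bessel function. For $\|x\|=1$, choosing $\theta=rx$ and using $I_0(r)\sim e^r/\sqrt{2\pi r}$ gives $\phi(x)\ge r-\log(2\pi I_0(r))\to+\infty$. So the entire support lies outside $\mathrm{dom}\,\phi$, and your convention $b_\phi=0$ would make the right-hand side vanish identically.

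\textbf{Correct repair.} State the identity on $\mathrm{dom}\,\phi$, as the source does. In the one-parameter setting the paper actually uses, this already covers $\nu$-almost every $x$, where $\nu=p_0\,d\lambda$ is the base measure:
\begin{enumerate}
\item Points of $\mathrm{int}\,C$ lie in $\mathrm{dom}\,\phi$.
\item A boundary point that is an atom lies in $\mathrm{dom}\,\phi$, because $\psi(\theta)\ge\theta x+\log\nu(\{x\})$ gives $\phi(x)\le-\log\nu(\{x\})$.
\item The remaining points of $C\setminus\mathrm{int}\,C$ (at most two, and non-atoms) and all points of $\R\setminus C$ are $\nu$-null.
\end{enumerate}
This almost-everywhere statement is all a density representation can mean. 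In dimension two or more it fails in general, as the von Mises--Fisher example shows.
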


An important consequence of this result is that, for observations
$x_1,\ldots,x_n$, maximizing the likelihood is equivalent to minimizing the average Bregman divergence $\frac{1}{n}\sum_{i=1}^{n}
D_{\phi}(x_i,\mu)$, whose minimizer is the sample mean. This result provides the theoretical link between likelihood-based estimation in exponential families and impurity measures based on Bregman divergences.

The correspondence established by Banerjee et al. is illustrated in Table \ref{tab:banerjee_examples}, which summarizes several classical exponential-family distributions together with their natural parameters, cumulant functions, convex conjugates, and the associated Bregman divergences. Many commonly used loss functions arise as special cases of this general framework. Moreover, if $\psi(\theta)$ denotes the cumulant function of a regular exponential family and $\phi$  is its Legendre conjugate, then the corresponding unit deviance is exactly twice the Bregman divergence generated by the associated $\phi$. Hence, GLM deviance criteria are proportional to Bregman divergences generated by the convex conjugate of the cumulant function. Therefore, all standard GLM deviance criteria arise as particular instances of the same convex construction.

\begin{table}[ht]
\centering
\scriptsize

\begin{tabular}{|l|c|c|c|c|c|c|}
\hline
Distribution &
$p_{\theta}(y)$ &
$\theta$ &
$\psi(\theta)$ &
$\mu$ &
$\phi(\mu)$ &
$D_{\phi}(y,\mu)$
\\ \hline

Normal &
$\frac{1}{\sqrt{2\pi}}
e^{-\frac12 (y-\mu)^2}$ &
$\mu$ &
$\frac{\theta^2}{2}$ &
$\mu$ &
$\frac{\mu^2}{2}$ &
$\frac12 (y-\mu)^2$
\\ \hline

Bernoulli &
$p^y(1-p)^{1-y}$ &
$\log\!\left(\frac{p}{1-p}\right)$ &
$\log(1+e^\theta)$ &
$p$ &
$\mu\log\mu+(1-\mu)\log(1-\mu)$ &
$y\log\!\left(\frac{y}{\mu}\right)
+(1-y)\log\!\left(\frac{1-y}{1-\mu}\right)$
\\ \hline

Poisson &
$\frac{e^{-\lambda}\lambda^y}{y!}$ &
$\log\lambda$ &
$e^\theta$ &
$\lambda$ &
$\mu\log\mu-\mu$ &
$y\log\!\left(\frac{y}{\mu}\right)
-(y-\mu)$
\\ \hline

Exponential &
$\lambda e^{-\lambda y}$ &
$-\lambda$ &
$-\log(-\theta)$ &
$1/\lambda$ &
$-\log\mu-1$ &
$\frac{y}{\mu}
-\log\!\left(\frac{y}{\mu}\right)-1$
\\ \hline

\end{tabular}
\caption{Examples of exponential-family distributions and their associated Bregman divergences. For each model, the table reports the natural parameter $\theta$, the cumulant function $\psi$, its Legendre conjugate $\phi=\psi^*$, and the corresponding divergence.}
\label{tab:banerjee_examples}
\end{table}

Table \ref{tab:banerjee_examples} highlights that familiar losses such as the squared loss, the Kullback–Leibler divergence, the Poisson deviance, and the Itakura–Saito divergence all arise from a common convex-analytic construction. This observation provides the foundation for the unified treatment developed in the remainder of the paper.

\section{Classification and Regression Trees}

Decision trees are a fundamental class of nonparametric methods for supervised learning, with the CART framework introduced by Breiman et al.~\cite{breiman1984classification} being one of the most widely used algorithms in machine learning. The popularity of CART stems from several attractive features. First, decision trees are highly interpretable, since the resulting model can be represented as a sequence of simple decision rules. Second, they are nonparametric and therefore require few assumptions about the underlying relationship between predictors and the response. Third, CART naturally handles both numerical and categorical predictors and is able to capture complex nonlinear interactions between variables. Finally, tree-based methods are computationally efficient and provide the foundation for many successful ensemble techniques, particularly random forests and boosting methods.

 In the regression setting, the objective is to approximate the regression function

\[
f^*(x)
=
\arg\min_{z\in\mathbb R}
\mathbb E[(Y-z)^2\mid X=x]
=
\mathbb E[Y\mid X=x]
\]
by a piecewise constant predictor. A tree induces a partition
$\{t_1,\ldots,t_M\}$ of the feature space and defines $f_T(x)
=
\sum_{j=1}^M
c_j\,\mathbf 1_{\{x\in t_j\}}$, where each region corresponds to a terminal node of the tree. For a fixed region $t_j$, the optimal constant for a regression problem is $c_j
=
\arg\min_c
\mathbb E[(Y-c)^2\mid X\in t_j]
=
\mathbb E[Y\mid X\in t_j]$ and in practice, this quantity is estimated by the sample mean $\hat c_j
=
\frac1{|t_j|}
\sum_{i:X_i\in t_j}
Y_i$.

The construction of the tree proceeds recursively. At each node $t$, a split is selected to maximize the impurity reduction,

\[
\Delta(t)
=
I(t)
-
\frac{|t_L|}{|t|}
I(t_L)
-
\frac{|t_R|}{|t|}
I(t_R),
\]
where $t_L$ and $t_R$ denote the child nodes. Classical choices for $I(t)$ include the Gini index 
$I(t) = \sum_{k=1}^K p_k (1 - p_k)$ in classification, the entropy 
$I(t) = -\sum_{k=1}^K p_k \log p_k$, and the within-node variance 
$I(t) = \frac{1}{|t|} \sum_{i \in t} (y_i - \bar{y}_t)^2$ in regression. 
Despite their empirical success, these criteria are typically introduced from 
heuristic or information-theoretic perspectives, and a unifying mathematical 
framework that captures their common structure explicit is less commonly emphasized in the literature. Many of them can be understood within a common geometric framework based on Bregman divergences. Table~\ref{tab:rpart_bregman} summarizes some examples corresponding to the methods implemented in \texttt{rpart}.

\begin{table}[ht]
\centering
\small

\begin{tabular}{lllll}
\hline
Method & Distribution & Generator $\phi$ & Divergence $D_\phi(y,\mu)$ & Criterion \\
\hline

anova
&
Gaussian
&
$\frac12 x^2$
&
$\frac12(y-\mu)^2$
&
Squared loss
\\[0.2cm]

class (entropy)
&
Bernoulli
&
$x\log x +(1-x)\log(1-x)$
&
$y\log\!\left(\frac{y}{\mu}\right)
+
(1-y)\log\!\left(\frac{1-y}{1-\mu}\right)$
&
Kullback--Leibler
\\[0.2cm]

class (Gini)
&
Bernoulli
&
$x^2$
&
$(y-\mu)^2,\quad y\in\{0,1\}$
&
Gini index
\\[0.2cm]

poisson
&
Poisson
&
$x\log x-x$
&
$y\log\!\left(\frac{y}{\mu}\right)
-(y-\mu)$
&
Poisson deviance
\\

\hline
\end{tabular}
\caption{Classical impurity measures used in CART and their interpretation as Bregman divergences.}\label{tab:rpart_bregman}
\end{table}

Table \ref{tab:rpart_bregman} reveals that many classical CART criteria can be interpreted through the lens of Bregman divergences. However, existing implementations are restricted to a relatively small collection of divergences and do not explicitly formulate tree construction as a general Bregman optimization problem. This observation motivates the development of the framework proposed in this paper. By replacing the quadratic loss with an arbitrary Bregman divergence, we obtain a unified methodology encompassing a broad class of exponential-family models while preserving the recursive partitioning structure of CART.

A fully grown tree typically achieves a very low training error but may suffer from overfitting. To control model complexity, CART employs a  pruning procedure based on the cost-complexity criterion. Let $\tilde{T}$ denote the set of terminal nodes of a tree \(T\). The empirical risk of the tree is defined as $R(T) =
\sum_{t\in\tilde{T}}
\frac{|t|}{n}\,I(t)$,
where \(I(t)\) denotes the impurity of node \(t\). The cost-complexity functional is then given by $
R_\alpha(T)
=
R(T)
+
\alpha |\tilde{T}|$, where \(\alpha\ge 0\) is a complexity parameter controlling the trade-off between goodness of fit and model complexity. Breiman et al.~\cite{breiman1984classification} showed that, as \(\alpha\) increases, there exists a finite nested sequence of subtrees $
T_0 \supset T_1 \supset \cdots \supset T_K$, where \(T_0\) is the maximal tree and each subtree is optimal for a range of increasing values of $\alpha$. 
In practice, the value of \(\alpha\) is selected by cross-validation. Let \(\widehat R_{CV}(\alpha)\) denote the cross-validated estimate of the prediction error associated with the subtree indexed by \(\alpha\), and let \(\widehat \sigma(\alpha)\) be its estimated standard error. If \(\alpha_{\min}\) minimizes the cross-validated error, the one-standard-error (1-SE) rule selects the largest value of \(\alpha\) satisfying

\[
\widehat R_{CV}(\alpha)
\le
\widehat R_{CV}(\alpha_{\min})
+
\widehat \sigma(\alpha_{\min}).
\]

Equivalently, among all trees whose estimated prediction error lies within one standard error of the minimum, the 1-SE rule chooses the smallest tree. This heuristic often produces substantially simpler models while maintaining predictive performance comparable to that of the tree with minimum cross-validated error.

From a statistical perspective, regression trees may also be viewed as local averaging estimators. We recall that, in the regression setting, CART is constructed to minimize the quadratic risk
$\mathcal{R}(f)
=
\mathbb{E}\big[(Y-f(X))^2\big]$, which is  minimized by
$f^*(x)
=
\mathbb{E}[Y\mid X=x]$. 
Note that if $A_1,\dots,A_n$ is a partition of the space generated from the sample
$D_n=\{(X_i,Y_i)\}_{i=1}^n$, the estimator $\hat f_{n}$ can be written as $\hat f_{n}(x)=\sum \limits_{i=1}^n W_{n,i}(x) Y_i$ where $A_n(x)$ denotes the cell containing $x$, $
W_{n,i}(x) = \frac{\mathbf{1}\{X_i \in A_n(x)\}}{N_n(x)}$,
$N_n(x) = \#\{i : X_i \in A_n(x)\}$. The weights $W_{n,i}$ are non-negative and sum to 1. This representation connects tree-based estimators with the theory of local averaging procedures developed by Stone (\cite{stone1977consistent}) and later by Devroye, Györfi and Lugosi (\cite{devroye1996feature}).

Unlike histogram rules based on predetermined partitions, the original CART algorithm does not admit a general consistency theorem. Nevertheless, consistency can be established under suitable assumptions on the sequence of partitions. 
A classical result due to Devroye, Györfi and Lugosi 
(\cite{devroye1996feature})  guarantees in this context the consistency of CART based on Stone's theorem ( \cite{stone1977consistent}). The principal restriction considered on the partition is that the weights $W_{n,i}$ depend only on $X_1,\ldots,X_n$ (and not on $Y_1,\ldots,Y_n$). Under squared-error loss, the theoretical risk and the risk with the estimator are defined as $R(f^*)=\min_f \mathbb{E}((Y-f(X))^2)$ and $$R(\hat{f}_n)= \mathbb{E}((Y-\hat f_n(X))^2)=\mathbb{E}_{D_n}(\mathbb{E}_{(X,Y)}((Y-\hat f_n(X))^2|D_n))$$
and the excess risk is $$R(\hat{f}_n)-R(f^*)=\mathbb{E}\big[\bigl(\hat f_n(X) - f^*(X)\bigr)^2\big]$$

\begin{theorem}[Devroye--Györfi--Lugosi]\label{DGLtheorem}
Let $(X,Y)\in \mathbb{R}^d \times \mathbb{R}$ and let $D_n=\{(X_i,Y_i)\}_{i=1}^n$ be an i.i.d. sample. Let $\mathcal{P}_n$ be a partition of $\mathbb{R}^d$ obtained from this sample. For $x \in \mathbb{R}^d$, let $A_n(x)$ be the cell of $\mathcal{P}_n$ containing $x$, and define
\[
\hat f_n(x) = \frac{1}{N_n(x)} \sum_{i: X_i \in A_n(x)} Y_i,
\quad
N_n(x) = \#\{i : X_i \in A_n(x)\}.
\]

Assume that:
\begin{enumerate}
    \item $\mathrm{diam}(A_n(x)) \xrightarrow{P} 0$ (Cell shrinkage),
    \item $N_n(x) \xrightarrow{P} \infty$ (Neighborhood growth),
    \item $\mathbb{E}[Y^2] < \infty$.
\end{enumerate}

Then
\[
\mathcal{R}(\hat f_n)-\mathcal{R}(f^*)=\mathbb{E}\big[\bigl(\hat f_n(X) - f^*(X)\bigr)^2\big] \to 0.
\]
\end{theorem}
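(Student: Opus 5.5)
The plan is to follow the classical route through Stone's theorem for local averaging estimators. The estimator is $\hat f_n(x)=\sum_i W_{n,i}(x)Y_i$ with $W_{n,i}(x)=\mathbf 1\{X_i\in A_n(x)\}/N_n(x)$. The weights are nonnegative and sum to one whenever $N_n(x)\ge 1$; we set $\hat f_n=0$ on empty cells. We read the hypotheses as holding for $X$ drawn independently of $D_n$. We also use, as stated before the theorem, that the partition depends only on $X_1,\dots,X_n$. Conditionally on the design, the $Y_i$ are then independent with means $f^*(X_i)$, and this is what makes the bias–variance split clean.

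\emph{Step 1 (decomposition).} Put $\sigma^2(x)=\mathrm{Var}(Y\mid X=x)$. Because the partition is $Y$-independent,
\[
\mathbb E\big[(\hat f_n(X)-f^*(X))^2\big]\le 2\,\mathbb E\Big[\Big(\sum_i W_{n,i}(X)\big(f^*(X_i)-f^*(X)\big)\Big)^2\Big]+2\,\mathbb E\Big[\sum_i W_{n,i}(X)^2\sigma^2(X_i)\Big],
\]
with an extra term $\mathbb E[f^*(X)^2\mathbf 1\{N_n(X)=0\}]$ coming from empty cells. That extra term tends to zero by dominated convergence, since $N_n(X)\to\infty$ in probability.

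\emph{Step 2 (bias term).} First take $f^*$ continuous with compact support, hence bounded and uniformly continuous. By Jensen the bias term is at most $\mathbb E[\sup_{y\in A_n(X)}|f^*(y)-f^*(X)|^2]$. This tends to zero by uniform continuity and $\mathrm{diam}(A_n(X))\to 0$ in probability, with boundedness supplying domination. For general $f^*\in L^2(\mu)$, which holds by $\mathbb E[Y^2]<\infty$, we approximate by a continuous compactly supported $g$ with $\|f^*-g\|_{L^2(\mu)}<\varepsilon$. This produces the term $\mathbb E[\sum_i W_{n,i}(X)(f^*-g)^2(X_i)]$. We control it with Stone's condition
\[
\mathbb E\Big[\sum_i W_{n,i}(X)h(X_i)\Big]\le C\,\mathbb E[h(X)], \qquad h\ge 0.
\]
For partition estimators this follows by symmetry, exchanging the roles of $X$ and $X_i$: $\mathbb E[\sum_i W_{n,i}(X)h(X_i)]=\mathbb E[h(X)\,\mathbb E(\cdots)]$, bounded via $\mathbb E\big[\mu(A)\,n/(1+\mathrm{Bin})\big]\le 1$-type estimates. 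I expect this to be the main obstacle. For data-dependent partitions the exchange argument needs care, and I would handle it by conditioning on $X_1,\dots,X_n$ and using that $A_n(X)$ is a cell of a design-measurable partition.

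\emph{Step 3 (variance term).} We have $\sum_i W_{n,i}(X)^2\sigma^2(X_i)\le \max_i W_{n,i}(X)\sum_i W_{n,i}(X)\sigma^2(X_i)$, and $\max_i W_{n,i}(X)=1/N_n(X)\to 0$ in probability. If $\sigma^2$ is bounded, the term goes to zero by dominated convergence. In general, truncate $\sigma^2$ at level $L$. The bounded part is handled as above. The remainder is at most $C\,\mathbb E[(\sigma^2-L)_+(X)]$ by Stone's condition, and this is small for large $L$ because $\mathbb E[\sigma^2(X)]\le \mathbb E[Y^2]<\infty$. Letting $n\to\infty$, then $\varepsilon\to0$ and $L\to\infty$, gives the claim.
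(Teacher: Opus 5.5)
The paper gives no proof of this theorem. It quotes the result from Devroye--Gy\"orfi--Lugosi and Stone, adding only that the weights should depend on $X_1,\dots,X_n$ alone. Your Stone-type argument is the classical one and is complete when $\mathcal P_n$ does not depend on the sample. Writing $\mu$ for the law of $X$, the exchange computation gives
\[
\mathbb E\Big[\sum_i W_{n,i}(X)h(X_i)\Big]=\sum_{A\in\mathcal P_n}\mu(A)\,n\int_A h\,d\mu\;\mathbb E\Big[\frac{1}{1+\mathrm{Bin}(n-1,\mu(A))}\Big]\le \mathbb E[h(X)],
\]
and Steps 1--3 go through. The gap is the step you flag, and conditioning on the design does not close it. Given $X_1,\dots,X_n$ you only get $\mathbb E[\sum_i W_{n,i}(X)h(X_i)\mid X_1,\dots,X_n]=\sum_i h(X_i)\,\mu(A_n(X_i))/N_n(X_i)$. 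The binomial bound needs the other sample points to be i.i.d.\ $\mu$ and independent of the cell. That fails when the cell is built from those very points, and nothing stops the rule from assigning large $\mu$-mass per sample point exactly where $h$ is large. Your decomposition, the bias bound for continuous $g$, the empty-cell term and the bounded-$\sigma^2$ variance bound all survive design-dependent partitions. What breaks are the two uses of Stone's condition: replacing $f^*$ by $g$ in Step 2, and truncating $\sigma^2$ in Step 3.

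This is not merely a limitation of the technique: for partitions depending on $X_1,\dots,X_n$ only, the statement is false, even for bounded noiseless $Y$. Take $X\sim\mathcal U[0,1]$ and let $\lambda$ denote Lebesgue measure. Choose a Borel set $B\subset[0,1]$ with $0<\lambda(B\cap I)<\lambda(I)$ for every nondegenerate interval $I$, and set $Y=f^*(X)=\mathbf 1_B(X)$. Cut $[0,1]$ into intervals $I_j$ of length $h_n$. With $S=\{X_1,\dots,X_n\}$, split each $I_j$ into the two cells $\bigl((I_j\cap B)\setminus S\bigr)\cup(S\cap I_j\cap B^c)$ and $\bigl((I_j\cap B^c)\setminus S\bigr)\cup(S\cap I_j\cap B)$. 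All diameters are at most $h_n$. For each fixed $h$ we have $m_h=\min_j\min\{\lambda(I_j\cap B),\lambda(I_j\cap B^c)\}>0$, so letting $h_n\to0$ slowly enough gives $n\,m_{h_n}\to\infty$, hence $N_n(X)\to\infty$ in probability. Yet almost surely $X\notin S$, so every sample point in the cell of $X$ carries the opposite label. Therefore $\mathbb E[(\hat f_n(X)-f^*(X))^2]\ge\mathbb P(N_n(X)\ge 1)\to 1$. A correct version needs more than $Y$-independence. One option is a data-independent partition, where your proof works as written. Another is a partition built on a separate subsample; your argument then applies conditionally on that subsample. A third is to impose combinatorial restrictions on the possible cells, in the spirit of Breiman et al.\ (Ch.~12) or Lugosi--Nobel, where the exchange argument is replaced by uniform deviation bounds.
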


\section{Bregman Trees}
Motivated by the correspondence between exponential families and Bregman divergences, we extend the CART framework by replacing the classical CART impurity measurewith a general Bregman impurity measure. This leads to a unified recursive partitioning methodology encompassing Gaussian, Poisson, Bernoulli, Gamma, Exponential, and many other models within a single convex-analytic framework. The resulting procedure retains the essential structure of CART while allowing the impurity criterion to be adapted to the underlying distributional assumptions.
Bregman Trees generalize the classical CART construction by replacing the impurity measure of CART with a Bregman divergence $D_\phi(y,z)=\phi(y)-\phi(z)-\langle \nabla\phi(z),\, y-z\rangle$ where \(\phi\) is a strictly convex differentiable function.  In the exponential family, two Bregman divergences arise naturally. The first, $D_\psi$, 
is generated by the log-partition function $\psi(\theta)$ and operates in the space of 
natural parameters; it measures the discrepancy between two distributions. The second, 
$D_\phi$, is generated by the Legendre-Fenchel conjugate $\phi = \psi^*$ and 
operates in the space of mean parameters $\mu = \nabla\psi(\theta)$. Since each leaf of 
the tree produces a prediction in the mean space, namely $\hat{\mu} = \bar{y}$, the 
appropriate loss for measuring prediction error is $D_\phi(y, \hat{\mu})$, which 
compares observed responses directly against predicted means. For regular exponential families, minimizing 
$\sum_i D_\phi(y_i, \hat{\mu})$ is equivalent to maximizing the log-likelihood (\cite{banerjee2005clustering}), establishing $D_\phi$ as the canonical prediction loss for exponential family 
regression. 

The associated population risk is\[\mathcal{R}_\phi(f)=\mathbb{E}\big[D_\phi(Y,f(X))\big],\] and the corresponding Bayes predictor $f_\phi^*(x)=\arg\min_z\mathbb{E}\big[D_\phi(Y,z)\mid X=x\big]$. As we saw in Section 2, a fundamental property of Bregman divergences implies that $f_\phi^*(x)=\mathbb{E}[Y\mid X=x]$.

Similarly to CART, a Bregman tree approximates the optimal predictor by a piecewise constant function $f_T(x)=\sum_{j=1}^M c_j \mathbf{1}_{\{x\in t_j\}}$.For each region \(t_j\), the optimal constant predictor is $c_j
=
\arg\min_c
\mathbb E[D_\phi(Y,c)\mid X\in t_j]
=
\mathbb E[Y\mid X\in t_j]$. Its empirical counterpart is the sample mean $\hat c_j
=
\frac1{|t_j|}
\sum_{i:X_i\in t_j}Y_i$. Accordingly, the impurity associated with a node \(t\) is defined as $I_\phi(t)=\frac{1}{|t|}\sum_{i:X_i\in t}D_\phi(Y_i,\bar Y_t)$, where $\bar Y_t=\frac{1}{|t|}\sum_{i:X_i \in t}Y_i$, which generalizes the classical within-node variance criterion used in CART. As in classical CART, the resulting estimator can be written as a local averaging procedure,
\[
\hat f_n(x)
=
\sum_{i=1}^n W_{n,i}(x)Y_i,
\]
where the weights are induced by the partition generated by the tree and satisfy $\sum_i W_{n,i}(x)=1$.

The construction of a Bregman Tree therefore differs from classical CART only through the impurity measure. Splits are selected by maximizing the reduction in Bregman impurity, while prediction within each terminal node is given by the sample mean. The theoretical analysis developed in this section relies on suitable assumptions on the generating function $\phi$. In particular, assuming that $\phi$ is strongly convex, we derive a nontrivial lower bound for the split gain, expressed in terms of the distance between the mean parameters associated with the child nodes. We also establish a consistency theorem for the resulting Bregman Tree estimator. 

\section{Main Results}

Having introduced the proposed Bregman Tree framework, we now establish its main theoretical properties. Recall that, for a node \(t\), the CART algorithm selects the split that maximizes the decrease in impurity

\[
\Delta(t)
=
I(t)
-
\Bigl(
p_{t_L} I(t_L)
+
p_{t_R} I(t_R)
\Bigr),
\]
where $p_{t_L}
=
\frac{|t_L|}{|t|},
\,
p_{t_R}
=
\frac{|t_R|}{|t|}$, and $I(t)$ denotes the impurity associated with node $t$. Motivated by the correspondence between exponential families and Bregman divergences, we generalize this criterion by replacing the classical impurity measure with a Bregman divergence generated by a differentiable strictly convex function \(\phi\).

\subsection{Split Criterion}

For a node \(t\), define the Bregman impurity by
\[
I_\phi(t)
=
\frac{1}{|t|}
\sum_{i\in t}
D_\phi(y_i,\mu_t),
\]
where $\mu_t
=
\arg\min_{\mu}
\sum_{i\in t}
D_\phi(y_i,\mu)$ is the Bregman barycenter of the observations contained in the node. The impurity reduction associated with a split
\(t=t_L\cup t_R\)
is then

\[
\Delta_\phi(t)
=
I_\phi(t)
-
\Bigl(
p_{t_L}I_\phi(t_L)
+
p_{t_R}I_\phi(t_R)
\Bigr).
\]

The following theorem provides the fundamental decomposition underlying the proposed methodology.

\begin{theorem}[Bregman split decomposition]
Let \(t=t_L\cup t_R\) be a split of a node \(t\). Then

\[
\Delta_\phi(t)
=
\frac{|t_L|}{|t|}
D_\phi(\mu_{t_L},\mu_t)
+
\frac{|t_R|}{|t|}
D_\phi(\mu_{t_R},\mu_t),
\]

\end{theorem}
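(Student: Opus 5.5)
The plan is to apply the bias--variance decomposition (item 3 of the Proposition on Bregman divergences) to the empirical distribution on each child node. First I would identify the barycenters. By item 2 of that Proposition, applied to the empirical measure on a node, the minimizer $\mu_t$ of $\sum_{i\in t}D_\phi(y_i,\mu)$ is the sample mean $\bar y_t$; likewise $\mu_{t_L}=\bar y_{t_L}$ and $\mu_{t_R}=\bar y_{t_R}$. Since $t=t_L\cup t_R$ is a disjoint union, the parent mean is the weighted average of the child means:
\[
\mu_t=p_{t_L}\mu_{t_L}+p_{t_R}\mu_{t_R}.
\]

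Next I would split the parent impurity over the two children,
\[
I_\phi(t)=\frac{1}{|t|}\sum_{i\in t_L}D_\phi(y_i,\mu_t)+\frac{1}{|t|}\sum_{i\in t_R}D_\phi(y_i,\mu_t).
\]
In each block I apply the three-point property (item 4) with $x=y_i$, $y=\mu_{t_L}$ and $z=\mu_t$:
\[
D_\phi(y_i,\mu_t)=D_\phi(y_i,\mu_{t_L})+D_\phi(\mu_{t_L},\mu_t)+\langle\nabla\phi(\mu_t)-\nabla\phi(\mu_{t_L}),\,y_i-\mu_{t_L}\rangle .
\]
Summing over $i\in t_L$, the inner-product term vanishes because $\sum_{i\in t_L}(y_i-\mu_{t_L})=0$. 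This is precisely the empirical form of item 3. The result is
\[
\frac{1}{|t|}\sum_{i\in t_L}D_\phi(y_i,\mu_t)=p_{t_L}I_\phi(t_L)+p_{t_L}D_\phi(\mu_{t_L},\mu_t),
\]
and the analogous identity holds for $t_R$.

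Adding the two identities and subtracting $p_{t_L}I_\phi(t_L)+p_{t_R}I_\phi(t_R)$ gives the claimed formula for $\Delta_\phi(t)$. The only delicate point is the cancellation of the cross term. It relies on the barycenter being the arithmetic mean and on the gradient term depending only on the fixed points $\mu_t$ and $\mu_{t_L}$, so it factors out of the sum. I would therefore state explicitly that the means lie in $C$, which holds since $C$ is convex and the $y_i$ lie in $C$, and that the argmin is attained, which follows from item 2. A corollary worth noting is that $\Delta_\phi(t)\ge 0$, with equality if and only if $\mu_{t_L}=\mu_{t_R}$.
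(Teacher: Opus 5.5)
Your proposal is correct and follows essentially the same route as the paper: apply the three-point identity within each child with $\mu_{t_L}$ (resp.\ $\mu_{t_R}$) as the intermediate point, cancel the cross term using $\sum_{i\in t_L}(y_i-\mu_{t_L})=0$, then add the two children and divide by $|t|$. One small correction: the cross term should be $\langle\nabla\phi(\mu_{t_L})-\nabla\phi(\mu_t),\,y_i-\mu_{t_L}\rangle$, since you copied a sign slip from item 4 of the paper's Proposition (the paper's own proof uses the correct sign), but this does not affect the result because the term sums to zero either way.
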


\begin{proof}
 The result follows to the three-point identity for Bregman divergences:  $$D_{\phi}(y,\mu)=D_{\phi}(y,\mu_A)+D_{\phi}(\mu_A,\mu)+\langle y-\mu_A,\nabla \phi(\mu_A)-\nabla \phi(\mu)\rangle$$  

 Summing over the observations in $t_L$

 $$\sum \limits_{i\in t_L}D_{\phi}(y_i,\mu)=\sum \limits_{i\in t_L}D_{\phi}(y_i,\mu_{t_L})+n_{t_L}D_{\phi}(\mu_{t_L},\mu)+\sum \limits_{i\in t_L}\langle y_i-\mu_{t_L},\nabla \phi(\mu_{t_L})-\nabla \phi(\mu)\rangle$$  as $\sum \limits_{i\in t_L}y_i-\mu_{t_L}=0$ we have that 
 $$\sum \limits_{i\in t_L}D_{\phi}(y_i,\mu)=\sum \limits_{i\in t_L}D_{\phi}(y_i,\mu_{t_L})+n_{t_L}D_{\phi}(\mu_{t_L},\mu)$$
An analogous identity holds for $t_R$: 
 $$\sum \limits_{i\in t_R}D_{\phi}(y_i,\mu)=\sum \limits_{i\in t_R}D_{\phi}(y_i,\mu_{t_R})+n_{t_R}D_{\phi}(\mu_{t_R},\mu)$$

 Summing on all the points of node $t=t_L\cup t_R$, we have:
 $$\sum_{i=1}^nD_{\phi}(y_i,\mu)=\sum_{i\in t_L}D_{\phi}(y_i,\mu)+\sum_{i\in t_R}D_{\phi}(y_i,\mu)=\sum \limits_{i\in t_L}D_{\phi}(y_i,\mu_{t_L})+n_{t_L}D_{\phi}(\mu_{t_L},\mu)+\sum \limits_{i\in t_R}D_{\phi}(y_i,\mu_{t_R})+n_{t_R}D_{\phi}(\mu_{t_R},\mu)$$

 $$\sum_{i=1}^nD_{\phi}(y_i,\mu)=\sum \limits_{i\in t_L}D_{\phi}(y_i,\mu_{t_L})+\sum \limits_{i\in {t_R}}D_{\phi}(y_i,\mu_{t_R})+n_{t_L}D_{\phi}(\mu_{t_L},\mu)+n_{t_R}D_{\phi}(\mu_{t_R},\mu)$$

 Dividing both sides by $n$ yields:

 $$I(\text{father})=I(\text{childs})+\frac{n_{t_L}}{n}D_{\phi}(\mu_{t_L},\mu)+\frac{n_{t_R}}{n}D_{\phi}(\mu_{t_R},\mu)$$


\end{proof}

The theorem shows that the gain produced by a split is completely determined by the Bregman discrepancies between the child-node barycenters and the parent-node barycenter, weighted by the relative node sizes. It has a clear interpretation: the gain of a split is exactly the weighted Bregman divergence between the children means and the parent mean. A split is informative when the children means diverge substantially from the parent mean, measured in the geometry that is natural for the chosen distribution family. Then, the theoretically optimal split consists of finding the split that maximizes the separation between the barycenters in Bregman metric.

Consequently, every split produces a nonnegative impurity reduction, so that \(\Delta_\phi(t)\) defines a valid splitting criterion.
\begin{corollary}
    
\[
\Delta_\phi(t)\ge 0.
\]

\end{corollary}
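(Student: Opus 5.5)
The plan is to read the corollary off the Bregman split decomposition theorem just proved and then use nonnegativity of Bregman divergences, item 1 of the Proposition in Section 2. By that theorem,
\[
\Delta_\phi(t)=\frac{|t_L|}{|t|}D_\phi(\mu_{t_L},\mu_t)+\frac{|t_R|}{|t|}D_\phi(\mu_{t_R},\mu_t).
\]
The weights $|t_L|/|t|$ and $|t_R|/|t|$ are nonnegative. Each divergence term is nonnegative because $\phi$ is strictly convex and differentiable, so $\phi$ lies above its tangent plane at $\mu_t$, i.e. $D_\phi(x,\mu_t)\ge 0$ for all $x$ in the domain. A nonnegative combination of nonnegative numbers is nonnegative, which gives $\Delta_\phi(t)\ge 0$.

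The one point to check is that the barycenters $\mu_{t_L},\mu_{t_R},\mu_t$ all lie in the convex domain $C$ of $\phi$, so that the divergences are defined and the tangent-plane inequality applies. I would use item 2 of the Proposition: the Bregman barycenter of a finite set of points is their arithmetic mean. Since $C$ is convex and each $y_i\in C$, every node mean lies in $C$.

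I would also note when equality holds. By item 1 of the Proposition, $D_\phi(x,y)=0$ only when $x=y$. So if both children are nonempty, $\Delta_\phi(t)=0$ exactly when $\mu_{t_L}=\mu_{t_R}=\mu_t$, i.e. when the two children have the same mean as the parent.

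As a cross-check that does not go through the decomposition, the inequality also follows from the variational definition of the barycenter. Since $\mu_t$ is the parent's barycenter, it is just one admissible constant for each child, whereas $\mu_{t_L}$ minimizes the child's empirical risk, so
\[
\sum_{i\in t_L}D_\phi(y_i,\mu_{t_L})\le \sum_{i\in t_L}D_\phi(y_i,\mu_t),
\]
and likewise for $t_R$. Adding these two inequalities and dividing by $|t|$ gives $p_{t_L}I_\phi(t_L)+p_{t_R}I_\phi(t_R)\le I_\phi(t)$. I expect no step of either argument to be a real obstacle.
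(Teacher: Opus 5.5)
Your proposal is correct and follows the paper's route: the paper reads $\Delta_\phi(t)\ge 0$ straight off the Bregman split decomposition, where the gain is a nonnegatively weighted sum of $D_\phi(\mu_{t_L},\mu_t)$ and $D_\phi(\mu_{t_R},\mu_t)$, each nonnegative by convexity of $\phi$. Your variational cross-check (the parent barycenter is an admissible but generally suboptimal constant for each child) is also valid, and it does not depend on the Bregman structure at all.
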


 Note that $\mu_t=\frac{|t_L|}{|t|}\mu_{t_L}+\frac{|t_R|}{|t|}\mu_{t_R}$. We use this fact in the following corollary. If \(\phi\) is \(\lambda\)-strongly convex, that is
 $D_\phi(x,y)
\ge
\frac{\lambda}{2}
\|x-y\|^2$, it yields a quantitative lower bound, implying that splits separating sufficiently distinct subpopulations necessarily generate a nontrivial gain and we have the following result:

\begin{corollary}
\[
\Delta_\phi(t)
\ge
\frac{\lambda}{2}
\frac{|t_L||t_R|}{|t|^2}
\,
\|\mu_{t_L}-\mu_{t_R}\|^2.
\]

\end{corollary}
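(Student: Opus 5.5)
Since $\mu_t$ is a convex combination of the child barycenters, the plan is to apply the strong-convexity bound to each term of the Bregman split decomposition and then compute the weighted squared distances explicitly. Write $p_L=|t_L|/|t|$ and $p_R=|t_R|/|t|$, so that $p_L+p_R=1$. By Proposition~1(2), the Bregman barycenter of a node is its sample mean, so $\mu_t=p_L\mu_{t_L}+p_R\mu_{t_R}$ as noted before the statement.

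The first step is to start from the Bregman split decomposition theorem,
\[
\Delta_\phi(t)=p_L D_\phi(\mu_{t_L},\mu_t)+p_R D_\phi(\mu_{t_R},\mu_t).
\]
Applying the assumed inequality $D_\phi(x,y)\ge \frac{\lambda}{2}\|x-y\|^2$ to each of the two terms gives
\[
\Delta_\phi(t)\ge \frac{\lambda}{2}\bigl(p_L\|\mu_{t_L}-\mu_t\|^2+p_R\|\mu_{t_R}-\mu_t\|^2\bigr).
\]
This is legitimate because the inequality is needed only at points of the domain $C$, and all the means lie in $C$ by convexity.

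The second step computes the deviations from the parent mean. From the convex-combination identity,
\[
\mu_{t_L}-\mu_t=p_R(\mu_{t_L}-\mu_{t_R}),\qquad \mu_{t_R}-\mu_t=-p_L(\mu_{t_L}-\mu_{t_R}).
\]
The weighted sum therefore equals
\[
\bigl(p_Lp_R^2+p_Rp_L^2\bigr)\|\mu_{t_L}-\mu_{t_R}\|^2=p_Lp_R(p_L+p_R)\|\mu_{t_L}-\mu_{t_R}\|^2=p_Lp_R\|\mu_{t_L}-\mu_{t_R}\|^2.
\]
Since $p_Lp_R=|t_L||t_R|/|t|^2$, this yields exactly the claimed bound.

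I do not expect a real obstacle here. The only point to check is the justification that $\mu_t$, $\mu_{t_L}$ and $\mu_{t_R}$ are sample means, which is what makes the convex-combination identity valid. This follows from the first-order condition $\sum_i(\nabla\phi(y_i)-\nabla\phi(\mu))$-type calculus, or directly from the mean-minimizer property in Proposition~1 applied to the empirical distribution on each node. The strong-convexity inequality is the usual one: $\lambda$-strong convexity of $\phi$ gives $\phi(x)\ge\phi(y)+\langle\nabla\phi(y),x-y\rangle+\frac{\lambda}{2}\|x-y\|^2$, which is exactly the stated lower bound on $D_\phi$.
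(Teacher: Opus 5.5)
Your proof is correct and follows the paper's argument: you bound each term of the split decomposition using $D_\phi(x,y)\ge\frac{\lambda}{2}\|x-y\|^2$ and substitute $\mu_t=p_L\mu_{t_L}+p_R\mu_{t_R}$, and your identity $p_Lp_R^2+p_Rp_L^2=p_Lp_R$ supplies algebra the paper skips (the paper's final display writes $|t_L|$ where one factor should be $|t_R|$). Drop the aside about $\sum_i(\nabla\phi(y_i)-\nabla\phi(\mu))$, since that is the first-order condition for minimizing $\sum_i D_\phi(\mu,y_i)$ rather than $\sum_i D_\phi(y_i,\mu)$; your appeal to the mean-minimizer property of Proposition~1 is the correct justification.
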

\begin{proof}
    This lower bound is a direct application of $\lambda$-strong convexity of $\phi$: for all  $x,y$
\[
\phi(x) \ge \phi(y) + \langle \nabla \phi(y), x-y \rangle + \frac{\lambda}{2}\|x-y\|^2.
\]

$$ \frac{|t_L|}{|t|}D_{\phi}(\mu_{t_L},\mu_t)+\frac{|t_R|}{|t|}D_{\phi}(\mu_{t_R},\mu_t)\geq \frac{\lambda}{2}\left( \frac{|t_L|}{|t|}||\mu_{t_L}-\mu_t||^2+\frac{|t_R|}{|t|}||\mu_{t_R}-\mu||^2\right)$$

As $\mu=\frac{|t_L|}{|t|}\mu_{t_L}+\frac{|t_R|}{|t|}\mu_{t_R}$ and substituting into the expression above, we obtain:

\[
\frac{|t_L|}{|t|}D_\phi(\mu_{t_L},\mu_t)
+
\frac{|t_R|}{|t|}D_\phi(\mu_{t_R},\mu_t)
\ge
\frac{\lambda}{2}\frac{|t_L|}{|t|}\frac{|t_L|}{|t|}\|\mu_{t_L} - \mu_{t_R}\|^2.
\]

\end{proof}

\subsection{A General Convex Framework for Tree Impurities}

The construction of impurity measures based on convex functions is closely related to several classical ideas in statistics, information theory, and convex analysis. The population expression of the impurity measure is $I_{\phi}(t)=\mathbb{E}(D_{\phi}(Y,\mu_t))$ where $\mu_t=\mathbb{E}(Y|X\in t)$. A straightforward calculus gives that $$I_{\phi}(t)=\mathbb{E}(\phi(Y)|X\in t)-\phi(\mathbb{E}(Y|X\in t))$$

Starting from the impurity reduction induced by a split of node $t$ into children $t_L$ and $t_R$,

$$\Delta_\phi(t)
=
I_\phi(t)
-
\Big(
p_{t_L} I_\phi(t_L)
+
p_{t_R} I_\phi(t_R)
\Big)$$

where $p_{t_L}=P(X\in t_L\mid X\in t)$ and $
p_{t_R}=P(X\in t_R\mid X\in t)$ and substituting the definition of $I_\phi$ into $\Delta_\phi(t)$ yields:

$$\Delta_\phi(t)
=
\Big(
\mathbb{E}[\phi(Y)\mid t]
-
\phi(\mathbb{E}[Y\mid t])
\Big) -
p_{t_L}
\Big(
\mathbb{E}[\phi(Y)\mid t_L]
-
\phi(\mathbb{E}[Y\mid t_L])
\Big)-
p_{t_R}
\Big(
\mathbb{E}[\phi(Y)\mid t_R]
-
\phi(\mathbb{E}[Y\mid t_R])
\Big)$$
Rearranging terms,

$$\Delta_\phi(t)
=
\Big(
\mathbb{E}[\phi(Y)\mid t]
-
p_{t_L}\mathbb{E}[\phi(Y)\mid t_L]
-
p_{t_R}\mathbb{E}[\phi(Y)\mid t_R]
\Big)
+
p_{t_L}\phi(\mu_L)
+
p_{t_R}\phi(\mu_R)
-
\phi(\mu_t),$$
where $\mu_t=\mathbb{E}[Y\mid t], \mu_L=\mathbb{E}[Y\mid t_L], \mu_R=\mathbb{E}[Y\mid t_R]$

Then, by the law of total expectation 
$\mathbb{E}[\phi(Y)\mid t]
=
p_{t_L}\mathbb{E}[\phi(Y)\mid t_L]
+
p_{t_R}\mathbb{E}[\phi(Y)\mid t_R]$, the first term vanishes, giving

$$\Delta_\phi(t)
=
p_{t_L}\phi(\mu_L)
+
p_{t_R}\phi(\mu_R)
-
\phi(\mu_t)$$

Since $\mu_t = p_{t_L}\mu_L +p_{t_R}\mu_R$, we obtain

$$\Delta_\phi(t)
=
p_{t_L}\phi(\mu_L)
+
p_{t_R}\phi(\mu_R)
-
\phi\!\left(
p_{t_L}\mu_L+p_{t_R}\mu_R
\right).$$

Since $M=\mathbb{E}[Y\mid T]$, the random variable $M$ takes the values $\mu_L=\mathbb{E}[Y\mid t_L]$ and 
$\mu_R=\mathbb{E}[Y\mid t_R]$ according to whether $T=t_L$ or $T=t_R$. Hence

$$\mathbb{E}[\phi(M)]
=
p_{t_L}\phi(\mu_L)
+
p_{t_R}\phi(\mu_R),\,\,\,\text{and}\,\,\,\mathbb{E}[M] =p_{t_L}\mu_L +p_{t_R}\mu_R= \mu_t$$


and consequently, $\Delta_\phi(t) =
\mathbb{E}[\phi(M)]-\phi\!\left(\mathbb{E}[M]\right)$, which is precisely the Jensen gap associated with the convex function $\phi$ (and therefore, the nonnegativity of the impurity reduction follows directly from the convexity of $\phi$). We have proved the following result:

\begin{theorem}\label{teo:jensen-gap}
Let $\phi:\mathbb{R}\to\mathbb{R}$ be a differentiable convex function, the Bregman divergence $D_{\phi}$ generated by $\phi$ and let

\[
I_\phi(t)
=\mathbb{E}(D_{\phi}(Y,\mu_t))=
\mathbb{E}\!\left[\phi(Y)\mid X\in t\right]
-
\phi\!\left(
\mathbb{E}[Y\mid X\in t]
\right)
\]

the impurity of node $t$. Consider a split $t=t_L\cup t_R$ and let
$T\in\{t_L,t_R\}$ be the child node containing the observation. Defining $M=\mathbb{E}(Y|T)$ and $\mu_t=\mathbb{E}(Y|X\in t)$, then the impurity reduction is

$$
\Delta_\phi(t)
=I_\phi(t)
-
\sum_{u\in\{t_L,t_R\}}
P(T=u)\,I_\phi(u)=
\mathbb{E}\!\left[
D_\phi(M,\mu_t)
\right]$$
\end{theorem}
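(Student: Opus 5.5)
The plan is to reduce the statement to the Jensen-gap computation carried out just before it, and then identify that gap with the expected Bregman divergence $\mathbb{E}[D_\phi(M,\mu_t)]$. All expectations are conditional on $X\in t$, and I write $p_{t_L}=P(T=t_L)$, $p_{t_R}=P(T=t_R)$, $\mu_L=\mathbb{E}[Y\mid t_L]$, $\mu_R=\mathbb{E}[Y\mid t_R]$. Integrability of $Y$ and $\phi(Y)$ is assumed implicitly, so every conditional expectation is finite.

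The first step justifies the closed form $I_\phi(u)=\mathbb{E}[\phi(Y)\mid u]-\phi(\mu_u)$ for $u\in\{t,t_L,t_R\}$. I will expand $D_\phi(Y,\mu_u)=\phi(Y)-\phi(\mu_u)-\phi'(\mu_u)(Y-\mu_u)$ and take conditional expectations. The linear term vanishes because $\mathbb{E}[Y-\mu_u\mid u]=0$.

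The second step substitutes these expressions into $\Delta_\phi(t)$. The tower property $\mathbb{E}[\phi(Y)\mid t]=p_{t_L}\mathbb{E}[\phi(Y)\mid t_L]+p_{t_R}\mathbb{E}[\phi(Y)\mid t_R]$ cancels all the $\phi(Y)$ terms. This leaves
\[
\Delta_\phi(t)=p_{t_L}\phi(\mu_L)+p_{t_R}\phi(\mu_R)-\phi(\mu_t)=\mathbb{E}[\phi(M)]-\phi(\mu_t),
\]
where $M=\mathbb{E}[Y\mid T]$ takes the value $\mu_L$ with probability $p_{t_L}$ and $\mu_R$ with probability $p_{t_R}$. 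The same tower property gives $\mathbb{E}[M]=\mu_t$.

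The third step closes the argument by applying the first step to the random variable $M$ in place of $Y$:
\[
\mathbb{E}[D_\phi(M,\mu_t)]=\mathbb{E}[\phi(M)]-\phi(\mu_t)-\phi'(\mu_t)\,\mathbb{E}[M-\mu_t].
\]
The last term vanishes since $\mathbb{E}[M]=\mu_t$. This matches the expression from the second step and proves the theorem. Written out, it also recovers the earlier split decomposition theorem in population form:
\[
\mathbb{E}[D_\phi(M,\mu_t)]=p_{t_L}D_\phi(\mu_L,\mu_t)+p_{t_R}D_\phi(\mu_R,\mu_t).
\]

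The main obstacle is essentially only bookkeeping: keeping track of which measure each expectation is taken under. Every expectation must be conditional on $X\in t$, with $p_{t_L},p_{t_R}$ the conditional child probabilities. The other point needing care is the cancellation of the linear term, which relies on $\mu_t$ being exactly the mean of $M$. No strict convexity is needed; differentiability and convexity of $\phi$ suffice, consistent with the hypotheses of the theorem.
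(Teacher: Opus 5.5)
Your proof is correct and follows the paper's own derivation: the closed form of the impurity, cancellation of the $\phi(Y)$ terms by the law of total expectation, and identification of $\Delta_\phi(t)$ with the Jensen gap $\mathbb{E}[\phi(M)]-\phi(\mathbb{E}[M])$. Your third step, which uses $\mathbb{E}[M]=\mu_t$ to kill the linear term and rewrite the gap as $\mathbb{E}[D_\phi(M,\mu_t)]$, is only implicit in the paper, so stating it is a useful completion of the argument.
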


Hence, the gain associated with a split is itself an expected Bregman divergence measuring the separation between the child-node means and the parent-node mean. Theorem \ref{teo:jensen-gap} shows that impurity reduction is exactly the Jensen gap associated with the convex generator $\phi$. Consequently, split selection in Bregman Trees can be interpreted as searching for partitions that maximize the departure from Jensen equality. This provides a purely convex-analytic characterization of recursive partitioning and unifies variance reduction, entropy reduction, and exponential-family deviance criteria within a common framework.

 For any convex differentiable generator $\phi$, the node impurity can be written as
\[
I_\phi
=
\mathbb E[\phi(Y)]
-
\phi(\mu)
=
\mathbb E[D_\phi(Y,\mu)],
\qquad
\mu=\mathbb E[Y].
\]
This framework unifies several classical tree criteria. In particular, $\phi(\mu)=\frac{\mu^2}{2}$ yields the usual variance reduction criterion, $\phi(\mu) =
\mu\log\mu-\mu$ yields the Poisson deviance criterion, $\phi(\mu)= -\log\mu$ yields the Gamma deviance criterion, and $\phi(\mu)
=
\mu\log\mu+(1-\mu)\log(1-\mu)$ yields the KL deviance (cross-entropy) criterion. Table~\ref{tab:bregman_impurities} shows several classical impurity measures arising from this construction.

\begin{table}[ht]
\centering
\small
\begin{tabular}{lllll}
\toprule
Criterion &
Generator $\phi$ &
$D_\phi(y,\mu)$ &
Associated loss &
Node impurity $I_\phi$
\\
\midrule

ANOVA
&
$\mu^2$
&
$(y-\mu)^2$
&
Squared error
&
$\operatorname{Var}(Y)$
\\[0.2cm]

Gini
&
$\|p\|^2$
&
$\|p-q\|^2$

&
&
$1-\sum_{k=1}^K p_k^2$
\\[0.2cm]

Entropy
&
$\displaystyle \sum_{k=1}^K p_k\log p_k$

&
$D_{KL}(p,q)
 =
 \sum_{k=1}^K
 p_k\log\frac{p_k}{q_k}$
&
Cross-entropy / KL
&
$\displaystyle
-\sum_{k=1}^K p_k\log p_k
$
\\[0.35cm]

Poisson
&
$\mu\log\mu-\mu$
&
$\displaystyle
y\log\frac{y}{\mu}
-y+\mu
$
&
Poisson deviance
&
$\displaystyle
\mathbb E[Y\log Y]
-\mu\log\mu
$
\\[0.35cm]

\\

\bottomrule
\end{tabular}
\caption{
Classical tree impurity criteria as special cases of Bregman impurities.
In all cases,
$
I_\phi
=
\mathbb E[\phi(Y)]
-
\phi(\mu)
=
\mathbb E[D_\phi(Y,\mu)]$.
}
\label{tab:bregman_impurities}
\end{table}

From this perspective, regression tree impurities, GLM deviances, entropy-based criteria, and variance reduction can all be viewed as special cases of a single principle $I_\phi(t)
=
\mathbb{E}[\phi(Y)\mid t]
-
\phi\!\left(
\mathbb{E}[Y\mid t]
\right)$, where \(\phi\) is an arbitrary convex function. To the best of our knowledge, the explicit formulation of regression tree construction through arbitrary convex generators \(\phi\), together with its interpretation in terms of expected Bregman divergences and exponential-family deviances, has not been systematically developed in the tree-learning literature.

\subsection{Consistency}

As mentioned in the previous section, there is no general consistency theorem for general CART, but there exists a result in a particular setting due to Devroye, Gyorfi and Lugosi (Theorem \ref{DGLtheorem}). For Bregman trees, the corresponding Bregman risk is $$\mathcal{R}(f)=\mathbb{E}\big[D_\phi(Y,f(X))\big]$$ and the Bayes predictor satisfies $f^*(x)=\arg\min_z\mathbb{E}\big[D_\phi(Y,z)\mid X=x\big]=\mathbb{E}[Y\mid X=x]$
We have that $\mathcal{R}(\hat f_n)-\mathcal{R}(f)=\mathbb{E}\bigl(D_{\phi}(f^*(X),\hat{f_n}(X))\bigr)$.

\begin{theorem}[Consistency of Bregman Trees]
Let \((X,Y)\in\mathbb{R}^d\times\mathbb{R}\), and let
\(\{(X_i,Y_i)\}_{i=1}^n\) be an i.i.d. sample. Let \(\phi:\mathbb{R}\to\mathbb{R}\) be a differentiable strictly convex function, and $D_{\phi}$ the associated Bregman divergence. Assume moreover that \(\phi\) is \(L\)-smooth, namely $D_\phi(x,y)
\leq
\frac{L}{2}||x-y||^2$. Let $\mathcal{P}_n$ be a partition of $\mathbb{R}^d$. For $x \in \mathbb{R}^d$, let $A_n(x)$ be the cell of $\mathcal{P}_n$ containing $x$, and define
\[
\hat f_n(x) = \frac{1}{N_n(x)} \sum_{i: X_i \in A_n(x)} Y_i,
\quad
N_n(x) = \#\{i : X_i \in A_n(x)\}.
\]

Assume that:
\begin{enumerate}
    \item $\mathrm{diam}(A_n(x)) \xrightarrow{P} 0$ (Cell shrinkage),
    \item $N_n(x) \xrightarrow{P} \infty$ (Neighborhood growth),
    \item $\mathbb{E}[Y^2] < \infty$.
\end{enumerate}

Then the excess Bregman risk converges to zero:

\[
\mathcal R_\phi(\hat f_n)-\mathcal R_\phi(f^*)
\to 0,
\]

\end{theorem}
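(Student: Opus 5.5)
The plan is to reduce the statement to Theorem \ref{DGLtheorem} (Devroye--Györfi--Lugosi). The key tools are the bias--variance (Bregman) decomposition of the excess risk and the $L$-smoothness of $\phi$.

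\textbf{Step 1: write the excess risk as a divergence.} Fix the sample $D_n$ and condition on $X=x$. Since $f^*(x)=\mathbb{E}[Y\mid X=x]$, the bias--variance decomposition (item 3 of the Proposition in Section 2), applied conditionally with $s=\hat f_n(x)$, gives
\[
\mathbb{E}\big[D_\phi(Y,\hat f_n(x))\mid X=x, D_n\big]
=
D_\phi\big(f^*(x),\hat f_n(x)\big)
+
\mathbb{E}\big[D_\phi(Y,f^*(x))\mid X=x\big].
\]
This step needs $(X,Y)$ to be independent of $D_n$, which holds because the sample is i.i.d. Integrating over $X$ and $D_n$ yields the identity already stated before the theorem:
\[
\mathcal R_\phi(\hat f_n)-\mathcal R_\phi(f^*)=\mathbb{E}\big[D_\phi(f^*(X),\hat f_n(X))\big].
\]
Two details must be checked here. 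First, the terms must be finite; this follows from $\mathbb{E}[Y^2]<\infty$ and the quadratic upper bound $D_\phi(Y,f^*(X))\le \frac L2 (Y-f^*(X))^2$. Second, on the event $N_n(x)=0$ we adopt the usual convention $\hat f_n(x)=0$, as in Theorem \ref{DGLtheorem}.

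\textbf{Step 2: apply $L$-smoothness.} Pointwise we have
\[
D_\phi(f^*(X),\hat f_n(X))\le \frac L2\,|f^*(X)-\hat f_n(X)|^2 .
\]
Hence
\[
0\le \mathcal R_\phi(\hat f_n)-\mathcal R_\phi(f^*)\le \frac L2\,\mathbb{E}\big[(\hat f_n(X)-f^*(X))^2\big],
\]
where nonnegativity comes from item 1 of the Proposition.

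\textbf{Step 3: invoke the classical theorem.} The estimator $\hat f_n$ is exactly the partition average of Theorem \ref{DGLtheorem}, and hypotheses 1--3 are assumed verbatim. So Theorem \ref{DGLtheorem} gives $\mathbb{E}[(\hat f_n(X)-f^*(X))^2]\to 0$, and the squeeze from Step 2 finishes the argument.

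\textbf{Main obstacle.} I expect the main difficulty to lie in Step 1, in making the conditional decomposition rigorous. The partition $\mathcal P_n$ may depend on the data, so $\hat f_n$ is a random function. The decomposition must therefore be applied conditionally on $D_n$ and on $X=x$, using the independence of the fresh pair $(X,Y)$ from the sample, together with a measurability and integrability check. The other point needing care is that Theorem \ref{DGLtheorem}, as stated, implicitly requires the weights to depend only on $X_1,\dots,X_n$. We inherit exactly that restriction, and I would state it explicitly rather than claim more. Apart from this, the proof is a direct comparison argument, since $L$-smoothness makes the Bregman excess risk dominated by the squared-error excess risk.
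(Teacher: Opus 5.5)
Your proposal is correct and follows the paper's route. Like the paper, you bound the excess Bregman risk $\mathbb{E}[D_\phi(f^*(X),\hat f_n(X))]$ by $\frac{L}{2}\mathbb{E}[(\hat f_n(X)-f^*(X))^2]$ using $L$-smoothness, and then conclude with Theorem \ref{DGLtheorem}. Your Step 1 adds the conditional bias--variance justification of the excess-risk identity, which the paper asserts without proof, and your caveat that the weights must depend only on $X_1,\dots,X_n$ is the same restriction the paper states alongside Theorem \ref{DGLtheorem}.
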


\begin{proof}
    
If $\phi$ is L-smooth (i.e., its gradient is L-Lipshitz or $||\phi''||_{\infty}\leq L)$ we have: $$\mathbb{E}\bigl(D_{\phi}(f^*(X),\hat{f_n}(X))\bigr) \leq \frac{L}{2}\mathbb{E}\bigl((f^*(X)-\hat{f_n}(X))^2\bigr)$$ and the usual consistency theorem DGL holds in this case. By \(L\)-smoothness,

\[
\mathcal R_\phi(\hat f_n)-\mathcal R_\phi(f^*)
\leq
\frac{L}{2}
\,
\mathbb E\Big[
\big(
\hat f_n(X)-f^*(X)
\big)^2
\Big],
\]
Then, by Theorem \ref{DGLtheorem} $
E[(f^*(X)-\hat f_n(X))^2]
\to 0$ and therefore

\[
\mathcal R_\phi(\hat f_n)-\mathcal R_\phi(f^*)
\le
\frac L2
E[(f^*(X)-\hat f_n(X))^2]
\to 0.
\]

This complete the proof.
\end{proof}

\begin{remark}
The smoothness assumption is automatically satisfied for the quadratic
generator and, more generally, for any twice differentiable generator
whose second derivative is bounded on the relevant range of mean
parameters. In particular, the generators associated with Poisson,
Gamma, Exponential, and Bernoulli models are locally smooth and locally
strongly convex on compact subsets of their domains.
\end{remark}

\section{Numerical Experiments}
We now illustrate the performance of the proposed Bregman trees through a series of Monte Carlo simulations. The goal of these experiments is to compare the classical CART tree, based on impurity measures available in {\sf rpart}  with Bregman Trees constructed using divergences naturally adapted to the underlying exponential-family responses.
A Monte Carlo study was conducted across five data generating processes:
Exponential, Gamma, Inverse Gaussian, Poisson, and Gaussian. In each scenario, 100
independent replications were run and out-of-sample metrics were averaged across replications.
The experiments are designed to evaluate whether adapting the impurity criterion to the underlying probabilistic structure of the data can improve predictive performance.

For each simulated dataset, we construct both a standard CART tree and a Bregman tree with cost-complexity pruning. To ensure a fair comparison, both CART and Bregman Trees were pruned using cost-complexity pruning with ten-fold cross-validation. For the proposed method, the validation error was measured using the Bregman divergence corresponding to the underlying loss function. The final subtree was selected according to the one-standard-error (1-SE) rule of \cite{breiman1984classification}, namely, the smallest subtree whose estimated prediction error was within one standard error of the minimum cross-validated error. Their predictive performances are then compared using both the corresponding deviance and the classical mean squared error (MSE). Although the Mean Squared Error (MSE) is a valid measure of prediction error in general, it is not the appropriate metric for evaluating models under the Inverse Gaussian distribution. The MSE treats prediction errors uniformly regardless of the magnitude of the conditional mean, implicitly assuming constant variance. It is reported as a secondary metric to verify that the Bregman tree does not sacrifice performance under the alternative criterion. Minimizing the deviance is equivalent to maximizing the log-likelihood of the model, establishing it as the statistically principled criterion for any member of the exponential family. Additionally, the deviance is the quantity that the Bregman tree explicitly optimizes at every stage of the pipeline — splits, pruning, and model selection — so evaluating predictive performance under this criterion is the only way to compare the two methods on equal and meaningful terms. We use $n=1000$ observations with a classical 2/3 to train and 1/3 to test. Repeating the procedure over 100 Monte Carlo replications allows us to evaluate not only average performance but also the stability and robustness of the proposed method.

To ensure that observed differences in predictive performance are attributable
solely to the choice of splitting criterion rather than to differences in tree
complexity, both methods were fitted under identical stopping rules:
\texttt{minsplit} $= 40$ and \texttt{minbucket} $= 20$, corresponding to $4\%$ and $2\%$ of the training sample respectively. These values ensure stable mean estimates at each terminal node,
which is particularly important under heavy-tailed distributions where
the sample mean converges more slowly than under Gaussian assumptions.
The divergences considered in the experiments arise from Legendre generators associated with classical exponential-family models. For the Gaussian, Poisson, Gamma, Exponential and Inverse Gaussian families, the corresponding generators are twice continuously differentiable on the interior of their domains. Moreover, on every compact subset bounded away from the boundary, they are strongly convex and possess Lipschitz-continuous gradients. These local regularity conditions are sufficient for the asymptotic theory developed in Section 5.

\begin{enumerate}
    \item {\bf First Simulation: Exponential model.} We consider a response variable $Y \sim exp(\lambda), 
        p_{Y}(y)=\lambda e^{-\lambda y}$. The  associated convex Legendre conjugate is $\phi(\mu)=-\log \mu -1$ and the Bregman divergence is the Itakura-Saito divergence $D_{\phi}(y,\mu)=\frac{y}{\mu}-\log\left(\frac{y}{\mu} \right)-1$.

      The data are generated as follows
     $X_1, X_2 \sim \mathcal{U}(0,1)$, $\epsilon \sim \mathcal{N}(0,1)$ and the mean structure is
   $\mu(X) = \exp(1+X_1+2X_2)$ and response
    $Y|X \sim \text{Exponential}(1/\mu)$.

    For the splitting criterion, we compare CART for regression with the quadratic loss of {\sf anova}  and for the Bregman tree Itakura--Saito divergence.
The Bregman Tree uses the Itakura--Saito (IS) divergence
as its splitting criterion, which is the natural Bregman divergence for the Gamma and
Exponential families. CART-anova minimises the MSE instead.

  Under an exponential data generating process, the Bregman Tree outperforms CART-anova
on both metrics and in the vast majority of individual replications. The IS deviance
advantage is substantial: the Bregman Tree reduces the mean IS deviance relative to CART-anova ($0.6487$ vs.\ $0.6994$). More strikingly, the Bregman
Tree also achieves a lower mean MSE ($359.81$ vs.\ $374.54$) and
wins on MSE in $69\%$ of replications, despite MSE being the criterion that CART-anova
directly optimises during training. This suggests that a distributionally appropriate splitting criterion may improve generalisation even on metrics that are not explicitly optimised during training.

\begin{table}[h]
\centering
\caption{Monte Carlo results: Exponential model.}
\begin{tabular}{lrrrr}
\hline
\textbf{Model} & \textbf{Mean IS Dev.} & \textbf{Mean MSE} & \textbf{Win rate (IS)} & \textbf{Win rate (MSE)} \\
\hline
Bregman Tree & \textbf{0.6487} & \textbf{359.8187} & \textbf{89\%} & \textbf{69\%} \\
CART-anova   & 0.6994          & 374.5418         & 11\%          & 31\%          \\
\hline
\end{tabular}
\end{table}

 \item {\bf Second Simulation: Gamma model.} The Exponential is a special case of the Gamma family, and the
IS divergence remains the natural splitting criterion for both. Increasing the shape
parameter $\alpha$ reduces the coefficient of variation of the response, making the
data less dispersed than in Simulation~1. We  generalize the previous simulation by considering a Gamma-distributed response $Y \sim \Gamma(k,\theta)$\footnote{The exponential distribution is a special case of Gamma distribution: $Exp(\lambda)=\Gamma(\alpha=1,\theta=\lambda)$}. The Bregman divergence is still the Itakura-Saito divergence $D_{\phi}(y,\mu)=\frac{y}{\mu}-\log\left(\frac{y}{\mu} \right)-1$

      The data are generated as follows
     $X_1, X_2 \sim \mathcal{U}(0,1)$, $\epsilon \sim \mathcal{N}(0,1)$ and the mean structure is
   $\mu(X) = \exp(1+X_1+2X_2)$ and response
   is $Y|X \sim \Gamma(k,\theta),
    \theta=\mu/k$ with shape parameter $k=5$.
For the splitting criterion, we compare CART for regression with the quadratic loss of {\sf anova}  and for the Bregman tree Itakura--Saito divergence.

\begin{table}[h]
\centering
\caption{Monte Carlo results: Gamma model with shape $\alpha=5$.}
\label{tab:sim_gamma}
\begin{tabular}{lrrrr}
\hline
\textbf{Model} & \textbf{Mean IS Dev.} & \textbf{Mean MSE}
               & \textbf{Win rate (IS)} & \textbf{Win rate (MSE)} \\
\hline
Bregman Tree & \textbf{0.1273} & \textbf{78.25} & \textbf{92\%} & \textbf{58\%} \\
CART-anova   & 0.1471          & 79.40         & 8\%           & 42\%           \\
\hline
\end{tabular}
\end{table}

The Bregman Tree is almost uniformly better under the IS deviance and remains slightly superior under MSE.
The mean IS deviance of Bregman Trees is $0.1273$  and for CART $0.1471$, and the mean MSE
 results are $78.25$  and $79.40$ respectively. The win rate reaches 92\% under IS deviance and 58\% under MSE, indicating a clear advantage for the Bregman Tree under the distributionally appropriate loss, while the difference in MSE is more modest. The stronger dominance relative to Simulation~1 is consistent with the
reduced dispersion of the Gamma ($\alpha=5$) response: with less noise in the data, the
alignment between the splitting criterion and the true data generative process becomes more decisive, and
the misspecification penalty paid by CART-anova is more clearly reflected in the results.

\item {\bf Third Simulation: Inverse Gaussian.} The response was generated as
$Y \mid \mathbf{X} \sim \mathrm{InvGauss}(\mu, \lambda)$, with $\mu = e^{1 + X_1 + 2X_2}$
and dispersion parameter $\lambda = 5$. The Bregman Tree uses the Inverse Gaussian
divergence $D_\phi(y,\mu) = (y-\mu)^2 / (2\mu^2 y)$, associated with the generator
$\phi(\mu) = 1/\mu$.

\begin{table}[h]
\centering
\caption{Monte Carlo results: Inverse Gaussian model (100 replications).}
\label{tab:sim_ig}
\begin{tabular}{lrrrr}
\hline
\textbf{Model} & \textbf{Mean IG Dev.} & \textbf{Mean MSE}
               & \textbf{Win rate (IG Dev.)} & \textbf{Win rate (MSE)} \\
\hline
Bregman Tree & \textbf{0.1133} & \textbf{1{,}689.46} & \textbf{63\%} & 49\% \\
CART-anova   & 0.1173         & 1{,}698.75          & 37\%          &\textbf{51\%}          \\
\hline
\end{tabular}
\end{table}

The Bregman Tree exhibits a modest advantage under the Inverse Gaussian deviance, while both methods perform nearly identically in terms of MSE. The mean IG deviance is reduced to $0.1133$
vs. $0.1173$, and also the mean MSE  ($1{,}689.46$ vs.\ $1{,}698.75$).  The Inverse Gaussian distribution has
heavier tails and a more complex variance structure than the Gamma, which may reduce the
effective advantage of using a correctly specified criterion. The Bregman Tree still exhibits a small but systematic advantage under the Inverse Gaussian deviance, while the two methods remain essentially tied under MSE.

\begin{figure}[h]
  \centering
  \begin{subfigure}[b]{0.32\textwidth}
    \centering
    \includegraphics[width=\textwidth]{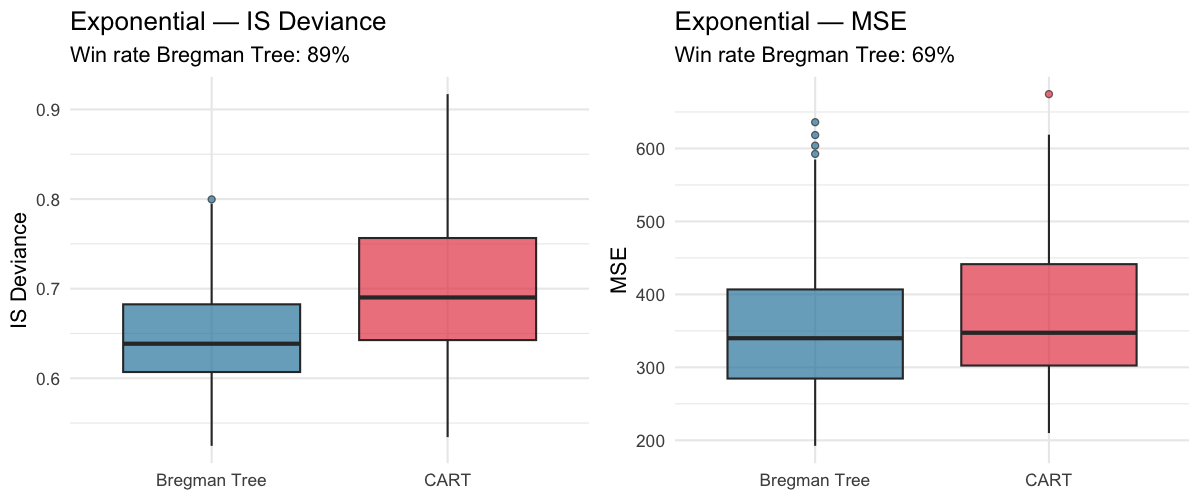}

    \caption{First Simulation: exponential model}
    \label{fig:fig1}
  \end{subfigure}
  \hfill
  \begin{subfigure}[b]{0.32\textwidth}
    \centering
    \includegraphics[width=\textwidth]{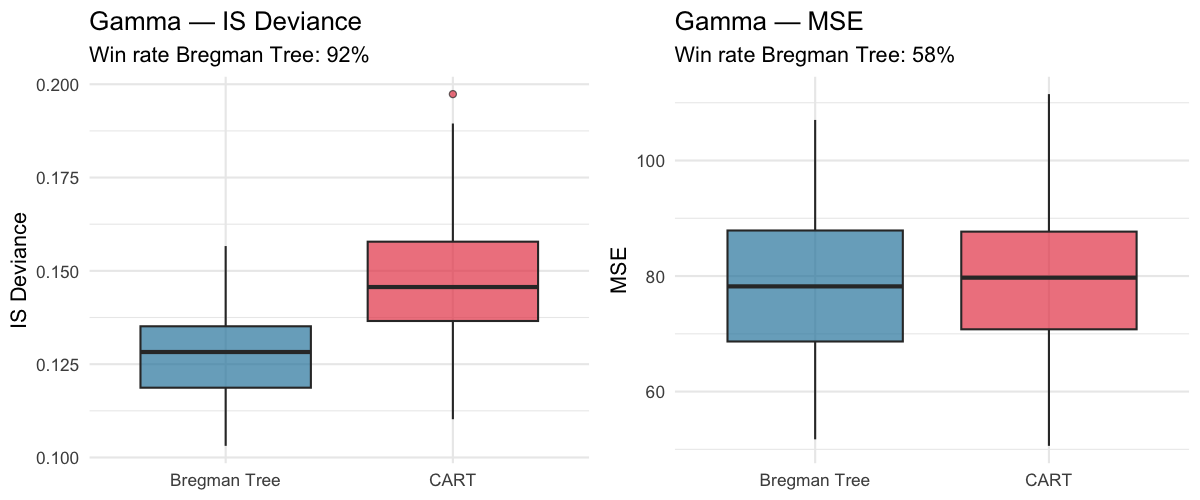}
    \caption{Second simulation: Gamma model}
    \label{fig:fig2}
  \end{subfigure}
  \hfill
  \begin{subfigure}[b]{0.32\textwidth}
    \centering
    \includegraphics[width=\textwidth]{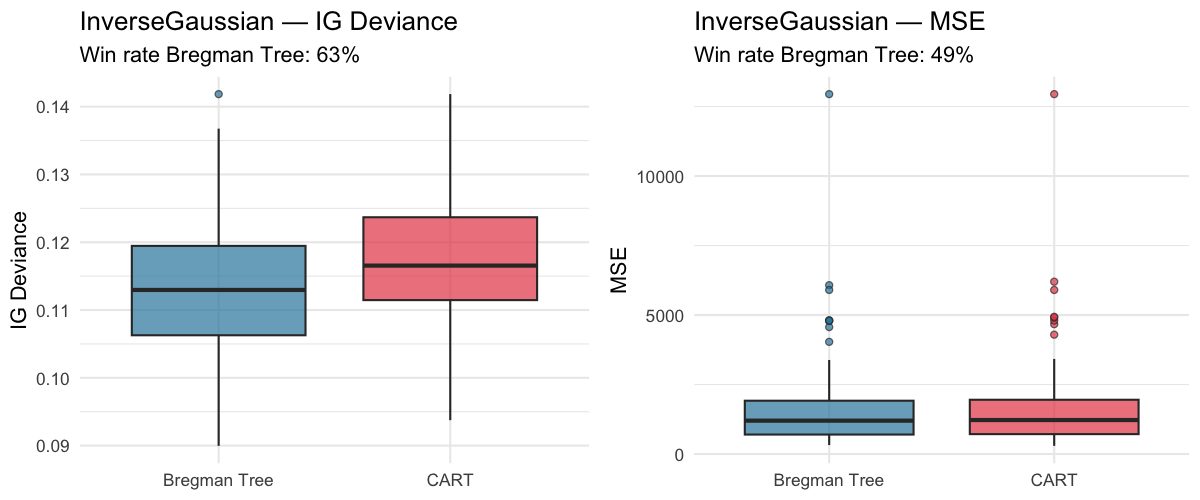}
    \caption{Third simulation: Inverse Gaussian model}
    \label{fig:fig3}
  \end{subfigure}
 \caption{Bregman Tree fitted under distributional losses unavailable in \texttt{rpart}:
Exponential (left), Gamma $(\alpha=5)$ (centre), and Inverse Gaussian $(\lambda=5)$ (right).
Each model uses the Bregman divergence associated with its respective data generating process.}
\label{fig:bregman_surfaces}
\end{figure}

\item {\bf Fourth Simulation: Poisson model.}
We consider a Poisson response variable. The  associated convex Legendre conjugate is $\phi(\mu)=\mu\log \mu -\mu$ and the Bregman divergence is  $D_{\phi}(y,\mu)=y\log \left(\frac{y}{\mu}\right)-(y -\mu)$

      The data are generated as follows
     $X_1, X_2 \sim \mathcal{U}(0,1)$, $\epsilon \sim \mathcal{N}(0,1)$ and the mean structure is $\mu(X) =\left\{\begin{array}{cc} \exp(1+2X_2)& X_1<0.5\\ \exp(2+X_2)&X_1 \geq 0.5\end{array}\right.$ with response
    $Y|X \sim \mathcal{P}(\mu)$.

    In this
scenario, both competing methods optimise the same criterion: the Bregman Tree uses the
KL divergence $D_\phi(y,\mu) = y\log(y/\mu) - y + \mu$, which is exactly the loss
minimised natively by the Poisson method implemented in CART. Accordingly, we compare the proposed method with CART  using {\sf method='poisson'}.

\begin{table}[h]
\centering
\caption{Monte Carlo results: Poisson model (100 replications).}
\label{tab:sim_poisson}
\begin{tabular}{lrrrr}
\hline
\textbf{Model} & \textbf{Mean Dev.} & \textbf{Mean MSE}
               & \textbf{Win rate (Dev.)} & \textbf{Win rate (MSE)} \\
\hline
Bregman Tree & 0.5999 & 12.6023 & 43\% & 38\% \\
CART-Poisson & \textbf{0.5900} & \textbf{12.3788} & \textbf{57\%} & \textbf{62\%} \\
\hline
\end{tabular}
\end{table}

This simulation serves as a negative control for the study. Since both procedures optimise the same population criterion, no systematic difference should be expected. The empirical results are fully consistent with this prediction. The differences in mean deviance ($0.5999$
vs.\ $0.5900$) and mean MSE ($12.6023$ vs.\ $12.3788$) are negligible. The slight edge of CART-Poisson is consistent with
random variation rather than a structural difference. This result serves as an important
negative control: it confirms that the gains observed in Simulations~1--3 are genuinely
attributable to criterion alignment, and not to some incidental implementation advantage
of the Bregman Tree.

\item {\bf Fifth Simulation: Gaussian model.} We finally look at a gaussian response variable. The  associated convex Legendre conjugate is $\phi(\mu)=\frac{1}{2}u^2$ and the Bregman divergence is the quadratic divergence $D_{\phi}(y,\mu)=\frac{1}{2}(y-\mu)^2$. The data are generated as follows

     $X_1, X_2 \sim \mathcal{U}(0,1)$, $\epsilon \sim \mathcal{N}(0,1)$ and the mean structure is
   $\mu(X) = 1+X_1+2X_2$ with response
    $Y|X \sim \mathcal{N}(\mu(X),1)$

 As in Simulation~4, Bregman Trees and CART-anova
optimise the same objective.

\begin{table}[h]
\centering
\caption{Monte Carlo results: Gaussian model (100 replications).}
\label{tab:sim_gauss}
\begin{tabular}{lr}
\hline
\textbf{Model} & \textbf{Mean MSE} \\
\hline
Bregman Tree & 1.1301 \\
CART-anova   & \textbf{1.1229} \\
\hline
\end{tabular}
\end{table}

Consistent with the Poisson simulation, no meaningful difference is observed between the
two methods. The mean MSE values ($1.1301$ vs.\ $1.1229$) are virtually identical, and the
small advantage of CART-anova falls well within the range of Monte Carlo variability.
This result closes the logical argument of the simulation study: the Bregman Tree offers
no advantage when its splitting criterion coincides with that of the benchmark, and
conversely, it provides systematic and sometimes substantial gains when it is strictly
better aligned with the true data generating process.

\begin{figure}[h]
  \centering
  \begin{subfigure}[b]{0.45\textwidth}
    \centering
    \includegraphics[width=\textwidth]{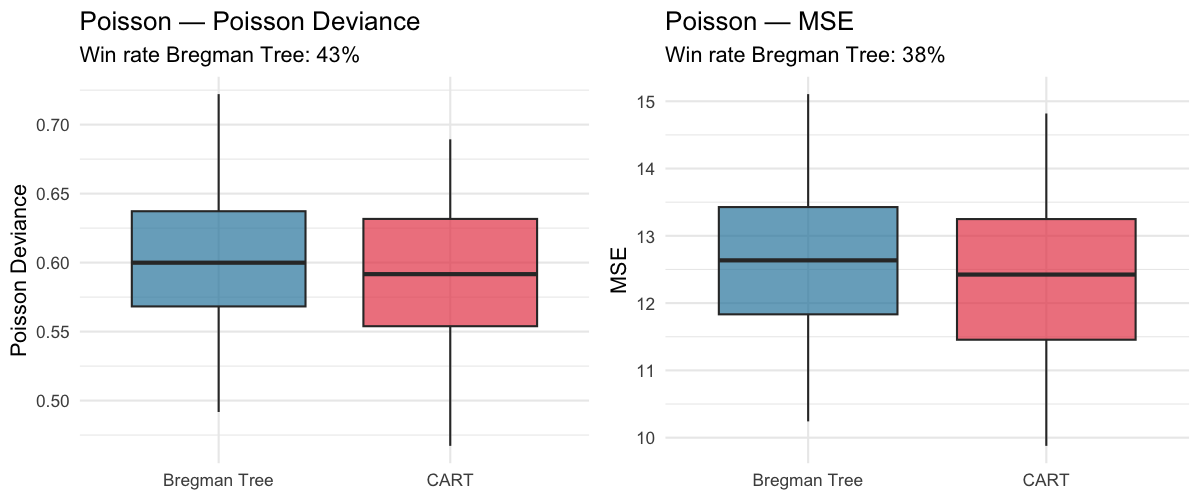}
    \caption{Poisson model.}
    \label{fig:sim_poisson}
  \end{subfigure}
  \hfill
  \begin{subfigure}[b]{0.45\textwidth}
    \centering
    \includegraphics[width=\textwidth]{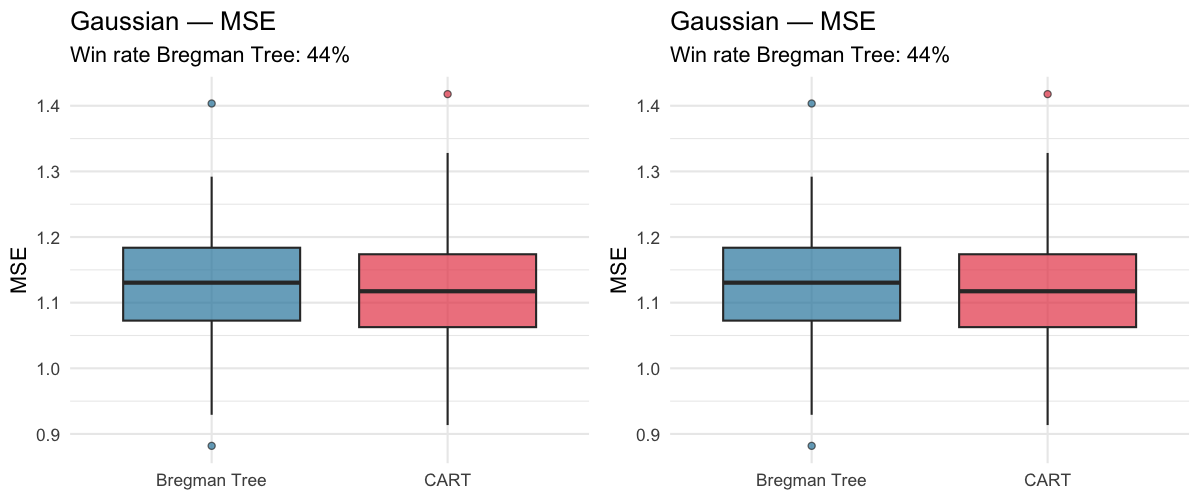}
    \caption{Gaussian model.}
    \label{fig:sim_gaussian}
  \end{subfigure}
  \caption{Bregman Tree vs.\ CART under Poisson (left) and Gaussian (right) data generating
  processes, where both methods optimise the same loss function. No significant differences
  in predictive performance are observed between the two approaches, confirming that the
  advantage of the Bregman Trees arises only when its splitting criterion is strictly better
  aligned with the true distribution than the competing method.}
  \label{fig:bregman_null}
\end{figure}

\end{enumerate}

Across all simulations, the advantage of the Bregman Tree appears precisely in settings where the classical CART criterion is misspecified relative to the underlying data generating process. The largest gains are observed for the Exponential and Gamma models, where the Itakura--Saito divergence provides a substantially better match to the conditional distribution than the quadratic loss. For the Inverse Gaussian model the improvement remains present but is more moderate, reflecting the increased variability of the response. In contrast, when both methods optimize the same population criterion, as in the Poisson and Gaussian settings, their predictive performances become essentially indistinguishable. This behaviour is entirely consistent with the theoretical interpretation of split gains as Bregman separations between conditional means and provides empirical support for the proposed framework.


\subsection{Real Data set}
We conclude by evaluating the proposed method on a real data set. The analysis is based on the \textit{insurance} dataset, which contains
1{,}338 records of medical insurance costs in the United States. The response variable
\texttt{charges} represents the amount billed to the patient by the insurer, expressed
in US dollars It is a continuous variable, strictly positive, and displays a pronounced right-skewed
distribution, which is consistent with a Gamma family model, which is well suited
for strictly positive, heteroscedastic responses where the variance tends to grow with the mean.
The explanatory variables are age, body mass index (BMI), smoking status, number of children, sex, and geographic region.

We fitted three single-tree models and compared them on a 70/30 train-test
split. Each model follows the same recursive partitioning logic but differs in the splitting
criterion it optimises, which makes the comparison particularly informative about the role
of the loss function in the tree-based modelling.

For the Bregman Tree, according to a possible Gamma distribution of $Y$, at each node, the split is chosen so as to minimise the Itakura--Saito (IS) divergence, which is the Bregman divergence naturally associated with the Gamma family through the convex generator $\phi(\mu) = -\log\mu$. Because this criterion is tailored to the distributional assumption of the response, one would expect it to yield better-calibrated predictions under a Gamma model. The final tree was selected by the 1-SE pruning rule. We compare this method with a the classical CART algorithm with anova, which greedily minimises the mean squared error (MSE) at each split. Although MSE is the standard choice for continuous responses, it implicitly assumes homoscedastic errors, an assumption that is unlikely to hold for insurance charges. The tree was pruned also by the 1-SE rule. Last we perform a variant of CART that replaces MSE with the Poisson deviance as the splitting criterion. The Poisson deviance penalises relative rather than absolute errors, making it more robust to the heavy tail of the response. As with the other models, pruning was carried out via the 1-SE rule.

The model performance on the test sample was assessed using
three complementary and accord metrics. The IS deviance $\mathrm{IS}(y,\hat{y})
  = \frac{1}{n}\sum_{i=1}^{n}
    \left(\frac{y_i}{\hat{y}_i} - \log\frac{y_i}{\hat{y}_i} - 1\right)$ measures how well predictions align with
the Gamma assumption; the Poisson deviance $\mathrm{DevPoisson}(y,\hat{y})
  = \frac{1}{n}\sum_{i=1}^{n}
    \left(y_i\log\frac{y_i}{\hat{y}_i} - y_i + \hat{y}_i\right)$ captures relative prediction accuracy, and finally we compare with the $\mathrm{MSE}(y,\hat{y})
  = \frac{1}{n}\sum_{i=1}^{n}(y_i - \hat{y}_i)^{2}$.

In Table~\ref{tab:results} we summarises the  performance of the
three models where we can see that the choice of splitting criterion has a
meaningful impact on predictive accuracy, particularly when the response distribution
departs from normality.

\begin{table}[h]
\centering
\caption{Out-of-sample performance of the three tree models on the test set.}
\label{tab:results}
\begin{tabular}{lrrrr}
\hline
\textbf{Model} & \textbf{Leaves} & \textbf{IS Deviance} & \textbf{Poisson Dev.} & \textbf{MSE} \\
\hline
Bregman Tree & 22 & 0.1212 & 1{,}895 & 21{,}343{,}132 \\
CART-anova   &  7 & 0.1443 & 2{,}097 & 22{,}314{,}072 \\
CART-Poisson & 27 & 0.1256 & 1{,}861 & 20{,}666{,}475 \\
\hline
\end{tabular}
\end{table}

In terms of tree size, CART-anova produces the most parsimonious structure, with only
7 terminal nodes. While this compactness is appealing from an interpretability standpoint,
it comes at a cost: CART-anova yields the worst performance across all three metrics,
suggesting that the model is underfitting the data. At the other extreme, CART-Poisson
grows the most complex tree with 27 leaves, yet it achieves the lowest MSE
($20{,}666{,}475$) and the best Poisson deviance ($1{,}861$), indicating that the
additional splits are capturing genuine structure in the data rather than noise. The
Bregman Tree sits between the two, with 22 leaves, and attains the lowest IS deviance
($0.1212$), which is the metric most directly aligned with its training objective and
with the assumed Gamma distribution of the response.

Regarding the IS deviance---the metric most relevant under a Gamma model---the Bregman
Tree outperforms both CART alternatives, as expected given that its splits are explicitly
designed to minimise this criterion. The gap between the Bregman Tree ($0.1212$) and
CART-anova ($0.1443$) is notable, representing roughly a $16\%$ relative reduction in
IS deviance. CART-Poisson ($0.1256$) closes much of this gap, which is not surprising
since the Poisson deviance also penalises relative errors and thus shares structural
similarities with the IS divergence.

In terms of MSE, CART-Poisson performs best ($20{,}666{,}475$), followed closely by the
Bregman Tree ($21{,}343{,}132$), while CART-anova lags behind ($22{,}314{,}072$). The
differences in MSE are substantial in absolute terms---over one million dollars in squared
error per observation on average---reflecting the heavy right tail of the \texttt{charges}
variable, where a small number of very high-cost cases can drive large squared residuals.
The fact that CART-anova, despite optimising MSE directly during training, does not achieve
the best test MSE is a reminder that a misspecified training criterion can lead to suboptimal
generalisation even on the metric it was designed to minimise.

Taken together, these findings suggest that explicitly accounting for the distributional
properties of the response---either through a Gamma-based criterion as in the Bregman Tree,
or through a relative-error criterion as in CART-Poisson---leads to better out-of-sample
performance than the standard CART based on squared-error loss. The Bregman Tree offers the best balance
between IS deviance and model interpretability, while CART-Poisson achieves the lowest
absolute and relative prediction errors at the cost of a more complex tree structure.

\section{Conclusion}

In this work, we studied impurity measures in CART decision trees through a Bregman divergence framework. This perspective unifies classical impurity criteria by expressing them in terms of Bregman divergences adapted to the statistical geometry of the problem. Our results show that many impurity criteria commonly used in machine learning — including squared loss, logistic deviance, Poisson deviance, and Kullback–Leibler divergence — can be understood within a single unified framework. In particular, the impurity gain associated with a split admits a natural geometric interpretation in terms of Bregman divergences between parent and child node barycenters.

Beyond the algorithmic perspective, the proposed approach establishes a rich connection between decision trees, convex analysis, and information geometry. Properties of the generating convex function, such as strong convexity and smoothness, directly influence the behavior of impurity gains and provide quantitative control on the geometry of the partitioning process. 

The numerical experiments support the theoretical findings. When the impurity criterion is aligned with the underlying data-generating process, Bregman trees can lead to improved performance over standard CART procedures. Conversely, when both methods optimise the same population criterion, their performances become essentially indistinguishable. These results suggest several possible directions for future work, including refined consistency results, statistical rates, extensions to classification settings, and ensemble methods based on non-Euclidean impurity criteria. More broadly, Bregman trees provide a flexible framework for adapting decision tree methods to the intrinsic geometry of the underlying statistical model.

\bibliographystyle{plainnat}
\bibliography{bibliography}

\newpage

\end{document}